\newtheorem{lem}{Lemma}
\newtheorem{theorem}[lem]{Theorem}
\newtheorem{prop}[lem]{Proposition}
\newcommand{\citenoun}[1]{\citet{#1}}
\newcommand{\citenop}[1]{\citealt{#1}}
\DeclareMathOperator*{\expect}{{\mathlarger {\mathbf E}}}
\newcommand{\expects}{\expect\nolimits}
\algnewcommand{\LineComment}[1]{\Statex \(\triangleright\) {\em #1}}
\algrenewcommand\algorithmicindent{0.75em}%
\begin{document}
% The file aaai.sty is the style file for AAAI Press 
% proceedings, working notes, and technical reports.
%
\title{Self-Correcting Models for Model-Based Reinforcement Learning}
\author{Erik Talvitie\\
Department of Mathematics and Computer Science\\
Franklin \& Marshall College\\
Lancaster, PA 17604-3003\\
\texttt{erik.talvitie@fandm.edu}
}
%\author{Anonymous\\
%Anonymous Institution}
\maketitle

%A hallucination is a fact, not an error; what is erroneous is a
%judgment based upon it. Bertrand Rusell

\begin{abstract}
  When an agent cannot represent a perfectly accurate model of its
  environment's dynamics, model-based reinforcement learning (MBRL)
  can fail catastrophically. Planning involves composing the
  predictions of the model; when flawed predictions are composed, even
  minor errors can compound and render the model useless for
  planning. Hallucinated Replay \citep{talvitie2014model} trains the
  model to ``correct'' itself when it produces errors, substantially
  improving MBRL with flawed models. This paper theoretically analyzes
  this approach, illuminates settings in which it is likely to be
  effective or ineffective, and presents a novel error bound, showing
  that a model's ability to self-correct is more tightly related to
  MBRL performance than one-step prediction error. These
  results inspire an MBRL algorithm for deterministic MDPs with
  performance guarantees that are robust to model class limitations.
\end{abstract}

\section{Introduction} \label{sec:intro}

In model-based reinforcement learning (MBRL) the agent learns a
predictive model of its environment and uses it to make decisions. The
overall MBRL approach is intuitively appealing and there are many
anticipated benefits to learning a model, most notably sample
efficiency \citep{szita2010model}. Despite this, with few exceptions
(e.g. \citenop{Abbeel:2006:UIM:1143844.1143845}), model-free methods
have been far more successful in large-scale problems. Even as
model-learning methods demonstrate increasing prediction accuracy in
high-dimensional domains (e.g. \citenop{bellemare2014skip},
\citenop{oh2015action}) this rarely corresponds to improvements in
control performance.

One key reason for this disparity is that model-free methods are
generally robust to representational limitations that prevent
convergence to optimal behavior. In contrast, when the model
representation is insufficient to perfectly capture the environment's
dynamics (even in seemingly innocuous ways), or when the planner
produces suboptimal plans, MBRL methods can fail catastrophically. If
the benefits of MBRL are to be gained in large-scale problems, it is
vital to understand how MBRL can be effective even when the model and
planner are fundamentally flawed.

Recently there has been growing awareness that the standard measure of
model quality, one-step prediction accuracy, is an inadequate proxy
for MBRL performance. For instance, \citenoun{sorg2010reward} and
\citenoun{joseph2013reinforcement} both pointed out that the most
accurate model by this measure is not necessarily the best for
planning. Both proposed optimizing model parameters for control
performance using policy gradient methods. Though appealing in its
directness, this approach arguably discards some of the benefits of
learning a model in the first place.

\begin{figure}
\centering
\includegraphics[width=.95\linewidth]{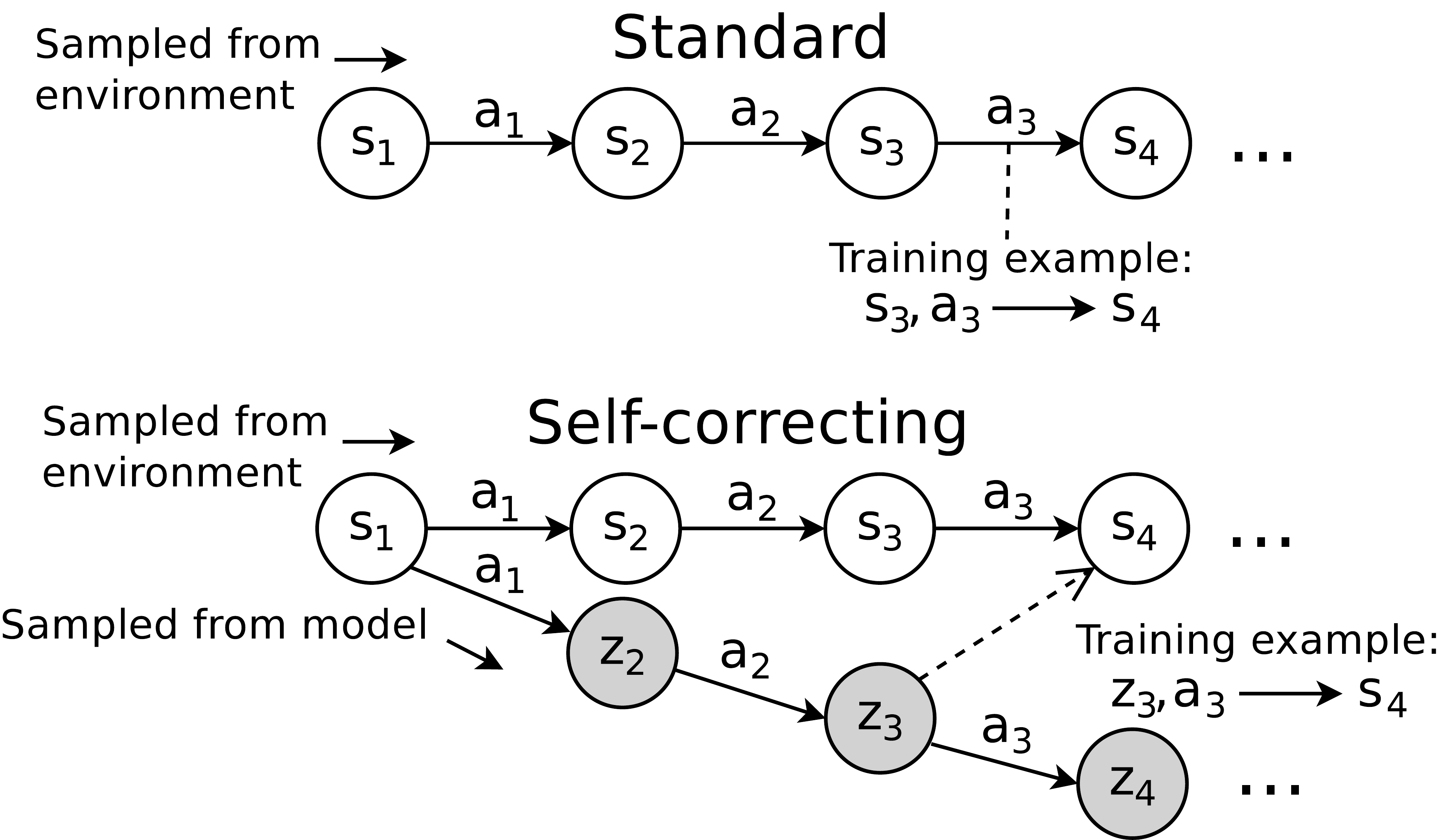}
\caption{Top: training models to predict environment states from
  environment states. Bottom: training models to predict environment
  states from states sampled from the model.}
\label{fig:trainingex}
%\vspace{-0.15in}
\end{figure}

\citenoun{talvitie2014model} pointed out that one-step prediction
accuracy does not account for how the model behaves when composed with
itself and introduced the Hallucinated Replay
meta-algorithm to address this. As illustrated in Figure
\ref{fig:trainingex}, this approach rolls out the model and
environment in parallel, training the model to predict the correct
environment state ($s_4$) even when its input is an incorrect sampled
state ($z_3$). This effectively causes the model to ``self-correct''
its rollouts. Hallucinated Replay was shown to enable meaningful
planning with flawed models in examples where the standard approach
failed. However, it offers no theoretical
guarantees. \citenoun{venkatraman2015improving} and
\citenoun{oh2015action} used similar approaches to improve
models' long-range predictions, though not in the MBRL setting.

This paper presents novel error bounds that reveal the theoretical
principles that underlie the empirical success of Hallucinated
Replay. It presents negative results that identify settings where
hallucinated training would be ineffective (Section
\ref{sec:hallucinatedGeneral}) and identifies a case where it yields a
tighter performance bound than standard training
(Section \ref{sec:hallucinatedDetBlind}). This result allows the derivation of a
novel MBRL algorithm with theoretical performance guarantees that are
robust to model class limitations (Section \ref{sec:hdaggermc}). The analysis
also highlights a previously underexplored practical concern with this
approach, which is examined empirically (Section \ref{sec:unrolled}).

\subsection{Notation and background}

We focus on {\em Markov decision processes} (MDP). The environment's
initial state $s_1$ is drawn from a distribution $\mu$. At each step
$t$ the environment is in a state $s_t$. The agent selects an action
$a_t$ which causes the environment to transition to a new state
sampled from the transition distribution:
$s_{t+1} \sim P_{s_t}^{a_t}$.  The environment also emits a reward,
$R(s_t, a_t)$. For simplicity, assume that the reward function is
known and is bounded within $[0, M]$.

A {\em policy} $\pi$ specifies a way to behave in the MDP. Let
$\pi(a \mid s) = \pi_s(a)$ be the probability that $\pi$ chooses
action $a$ in state $s$. For a sequence of actions $a_{1:t}$ let
$P(s' \mid s, a_{1:t}) = P_s^{a_{1:t}}(s')$ be the probability of
reaching $s'$ by starting in $s$ and taking the actions in the
sequence. For any state $s$, action $a$, and
policy $\pi$, let $D^t_{s, a, \pi}$ be the state-action distribution
obtained after $t$ steps, starting with state $s$ and action $a$ and
thereafter following policy $\pi$. For a state action distribution
$\xi$, let
$D^t_{\xi, \pi} = \expects_{(s, a) \sim \xi} D^t_{s, a, \pi}$. For a
state distribution $\mu$ let
$D^t_{\mu, \pi} = \expects_{s \sim \mu, a \sim \pi_s} D^t_{s, a,
  \pi}$.
For some discount factor $\gamma \in [0, 1)$, let
$D_{\mu, \pi} = (1 - \gamma) \sum_{t=1}^{\infty}\gamma^{t-1}D^t_{\mu,
  \pi}$
be the infinite-horizon {\em discounted} state-action distribution
under policy $\pi$.

The $T$-step {\em state-action value} of a policy, $Q^\pi_T(s, a)$
represents the expected discounted sum of rewards obtained by taking
action $a$ in state $s$ and executing $\pi$ for an additional $T-1$
steps:
$Q^\pi_T(s, a) = \sum_{t = 1}^{T}\gamma^{t-1} \expects_{(s', a') \sim
  D^t_{s, a, \pi}} R(s', a')$.
Let the $T$-step {\em state value}
$V^\pi_T(s) = \expects_{a \sim \pi_s}[Q^\pi_T(s, a)]$. For infinite
horizons we write $Q^\pi = Q^\pi_{\infty}$, and
$V^\pi = V^\pi_{\infty}$. The agent's goal will be to learn a policy
$\pi$ that maximizes $\expects_{s \sim \mu}[V^\pi(s)]$.

The MBRL approach is to learn a model $\hat{P}$, approximating $P$,
and then to use the model to produce a policy via a planning
algorithm. We let $\hat{D}$, $\hat{Q}$, and $\hat{V}$ represent the
corresponding quantities using the learned model. Let $\mathcal{C}$
represent the {\em model class}, the set of models the learning
algorithm could possibly produce. Critically, in this paper, it is not
assumed that $\mathcal{C}$ contains a perfectly accurate model.

\section{Bounding value error}

We consider an MBRL architecture that uses the simple one-ply Monte
Carlo planning algorithm (one-ply MC), which has its roots in the
``rollout algorithm'' \citep{tesauro1996line}. For every state-action
pair $(s, a)$, the planner executes $N$ $T$-step ``rollouts'' in
$\hat{P}$, starting at $s$, taking action $a$, and then following a
{\em rollout policy} $\rho$. Let $\bar{Q}(s, a)$ be the average
discounted return of the rollouts. For large $N$, $\bar{Q}$ will
closely approximate $\hat{Q}_T^{\rho}$ \citep{kakade2003sample}. The agent will select its
actions greedily with respect to
$\bar{Q}$. \citenoun{talvitie2015agnostic} bounds the performance of
one-ply MC in terms of model quality.

For a policy $\pi$ and state-action distribution $\xi$, let
$\epsilon_{val}^{\xi, \pi, T} = \expect_{(s, a) \sim \xi}\big[|Q_T^{\pi}(s,
a) - \hat{Q}_T^{\pi}(s, a)|\big]$
be the error in the $T$-step state-action values the model assigns to
the policy under the given distribution. Then the following result can
be straightforwardly adapted from one provided by
\citenoun{talvitie2015agnostic}.

\begin{lem} \label{lem:mcvaluebound}
  Let $\bar{Q}$ be the state-action value function returned by
  applying one-ply Monte Carlo to the model $\hat{P}$ with rollout
  policy $\rho$ and rollout depth $T$. Let $\hat{\pi}$ be greedy
  w.r.t. $\bar{Q}$. For any policy $\pi$ and state-distribution $\mu$,
  \begin{align*}
    \expect_{s \sim \mu}\big[V^\pi(s) - V^{\hat{\pi}}(s)\big] \le
    \frac{4}{1 - \gamma}\epsilon_{val}^{\xi, \rho, T}  +\epsilon_{mc}, 
    \end{align*}
    where we let $\xi(s, a) = \frac{1}{2} D_{\mu, \hat{\pi}}(s, a) + \frac{1}{4}
  D_{\mu, \pi}(s, a) + \frac{1}{4}\left((1 - \gamma) \mu(s)
    \hat{\pi}_s(a) + \gamma \sum_{z, b} D_{\mu, \pi}(z, b)
    P_{z}^{b}(s) \hat{\pi}_s(a) \right)$ and  $\epsilon_{mc} = \frac{4}{1 - \gamma}\|\bar{Q} -
  \hat{Q}^{\rho}_T\|_\infty + \frac{2}{1 - \gamma} \|BV^{\rho}_T -
  V^{\rho}_T\|_{\infty}$ (here $B$ is the Bellman operator).
\end{lem}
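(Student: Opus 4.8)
The plan is to adapt the approximate-greedy performance bound of \citenoun{talvitie2015agnostic}, separating the total regret into three conceptually distinct sources of error: the Monte Carlo sampling error between $\bar{Q}$ and the model value $\hat{Q}_T^\rho$, the truncation error incurred by using $T$-step rather than infinite-horizon values, and the model's own value error relative to the true environment. The first two can be bounded uniformly in sup-norm and collected into $\epsilon_{mc}$, while the third is the distribution-dependent quantity $\epsilon_{val}^{\xi,\rho,T}$ that the argument must show can be measured under the single mixture $\xi$.

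First I would invoke the performance difference lemma to rewrite $\expects_{s\sim\mu}[V^\pi(s) - V^{\hat\pi}(s)]$ as a $\frac{1}{1-\gamma}$-weighted expectation of advantages over the discounted visitation distributions $D_{\mu,\pi}$ and $D_{\mu,\hat\pi}$; this is where the leading factor $\frac{1}{1-\gamma}$ and those two components of $\xi$ originate. Next I would splice in the greedy step: because $\hat\pi$ is greedy with respect to $\bar{Q}$, any comparison of the form $\expects_{a\sim\pi_s}[\bar{Q}(s,a)] - \expects_{a\sim\hat\pi_s}[\bar{Q}(s,a)]$ is nonpositive and may be dropped, at the cost of $2\|\bar{Q}-\hat{Q}_T^\rho\|_\infty$ to convert between $\bar{Q}$ and $\hat{Q}_T^\rho$. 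The remaining mismatch between infinite-horizon true values and the $T$-step model value of $\rho$ is handled by telescoping a single Bellman backup, which is exactly what produces the $\|BV_T^\rho - V_T^\rho\|_\infty$ term and, through one application of the true transition kernel, the one-step-lookahead component $(1-\gamma)\mu(s)\hat\pi_s(a) + \gamma\sum_{z,b}D_{\mu,\pi}(z,b)P_z^b(s)\hat\pi_s(a)$ of $\xi$.

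Each time a true value $Q_T^\rho$ is exchanged for its model counterpart $\hat{Q}_T^\rho$, the cost is an instance of $\epsilon_{val}$ measured under whatever marginal that term currently lives on. The core of the argument is to verify that the three marginals that appear, namely $D_{\mu,\hat\pi}$, $D_{\mu,\pi}$, and the one-step-lookahead distribution, can be aggregated with the stated weights $\tfrac12,\tfrac14,\tfrac14$ and an overall factor of $4$ into the single expectation $\epsilon_{val}^{\xi,\rho,T}$, using linearity of expectation over the mixture together with the triangle inequality wherever a single error term straddles more than one component.

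I expect the main obstacle to be precisely this distributional bookkeeping: keeping track of which distribution each $\pm$ term is evaluated under as the advantages are decomposed and the greedy step is applied, so that every value-error contribution is dominated by one mixture expectation rather than several incomparable ones. Matching the exact mixture weights and the constant $4$ demands care, whereas the supporting manipulations, namely the triangle inequalities, the telescoping Bellman sum, and the reduction of sampling error to $\|\bar{Q}-\hat{Q}_T^\rho\|_\infty$, should be routine.
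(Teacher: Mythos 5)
The paper never actually proves this lemma: it is the one result whose argument is omitted from the appendix, stated as ``straightforwardly adapted'' from \citenoun{talvitie2015agnostic}, so there is no in-paper derivation to compare against line by line. Your route is the standard one that the cited analysis follows --- performance-difference decomposition supplying the $\frac{1}{1-\gamma}$ factor and the $D_{\mu,\pi}$ and $D_{\mu,\hat{\pi}}$ components, the greedy step w.r.t.\ $\bar{Q}$ paid for in $\|\bar{Q}-\hat{Q}^{\rho}_T\|_\infty$, the Bellman residual $\|BV^{\rho}_T - V^{\rho}_T\|_\infty$ from relating the $T$-step value of $\rho$ to the greedy backup, and one application of the true kernel producing the lookahead component $(1-\gamma)\mu(s)\hat{\pi}_s(a) + \gamma\sum_{z,b}D_{\mu,\pi}(z,b)P_z^b(s)\hat{\pi}_s(a)$ of $\xi$ --- so the ingredients and their provenance are all correctly identified.

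The gap is that you stop exactly where the lemma's content begins. You state that the ``core of the argument is to verify'' that the three marginals aggregate with weights $\tfrac12,\tfrac14,\tfrac14$ and overall constant $4$, and that ``matching the exact mixture weights and the constant $4$ demands care,'' but you never carry out that verification. Since the specific form of $\xi$ and the constants $\frac{4}{1-\gamma}$ and $\frac{2}{1-\gamma}$ are precisely what the lemma asserts (and what Lemma 9's proof later consumes when it matches $\xi_n$ against the sampling distribution in lines 6--13 of Algorithm 1), deferring this step leaves the proposal as a correct plan rather than a proof. To close it you would need to write out the advantage decomposition explicitly, track which visitation distribution each $Q^{\rho}_T$-versus-$\hat{Q}^{\rho}_T$ swap is evaluated under (the coefficient $\tfrac12$ on $D_{\mu,\hat{\pi}}$ rather than $\tfrac14$ signals that this component absorbs two such swaps), and confirm that the weighted sum of those expectations is dominated by $4\,\epsilon_{val}^{\xi,\rho,T}$.
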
 
The $\epsilon_{mc}$ term represents error due to limitations of the
planning algorithm: error due to the sample average $\bar{Q}$ and the
sub-optimality of the $T$-step value function with respect to $\rho$. The
$\epsilon_{val}^{\xi, \rho, T}$ term represents error due to the
model parameters. The key factor in the model's usefulness for
planning is the accuracy of the value it assigns to the rollout policy
in state-actions visited by $\pi$ and $\hat{\pi}$. Our goal in the
next sections is to bound $\epsilon_{val}^{\xi, \rho, T}$ in terms
of measures of model accuracy, ultimately deriving insight into how to
train models that will be effective for MBRL. Proofs may be found in
the appendix.

\subsection{One-step prediction error} \label{sec:onestep}

Intuitively, the value of a policy should be accurate if the model is
accurate in states that the policy would visit. We can adapt a bound
from \citenoun{ross2012agnostic}. 
\begin{lem} \label{lem:onestepbound} 
For any policy $\pi$ and
  state-action distribution $\xi$,
\begin{align*}
\epsilon_{val}^{\xi, \pi, T} \le \frac{M}{1 - \gamma}\sum_{t = 1}^{T-1} (\gamma^{t} - 
\gamma^{T})\expect_{(s, a) \sim
  D^t_{\xi, \pi}}\big[\|P_s^a - \hat{P}_s^a\|_1\big].
\end{align*}
\end{lem}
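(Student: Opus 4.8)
The plan is to bound $|Q^\pi_T(s,a) - \hat Q^\pi_T(s,a)|$ pointwise by a telescoping \emph{hybrid} argument and then take the expectation over $\xi$. Both $Q^\pi_T$ and $\hat Q^\pi_T$ are discounted sums of expected rewards along a $T$-step rollout from $(s,a)$ under $\pi$; they differ only in that the former propagates states with the true kernel $P$ and the latter with the model $\hat P$. For $k = 0, 1, \ldots, T-1$ I would define a hybrid value $G_k$ that uses the true dynamics $P$ for the first $k$ transitions and the model $\hat P$ for all remaining transitions. Then $G_{T-1} = Q^\pi_T(s,a)$, $G_0 = \hat Q^\pi_T(s,a)$, and so $Q^\pi_T(s,a) - \hat Q^\pi_T(s,a) = \sum_{k=0}^{T-2}(G_{k+1} - G_k)$.

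The crucial observation is that $G_{k+1}$ and $G_k$ differ in exactly one transition, the $(k{+}1)$-th. Up to and including step $k+1$ both hybrids use the true kernel, so they induce the \emph{same} distribution over $(s_{k+1}, a_{k+1})$ --- namely $D^{k+1}_{s,a,\pi}$ --- and collect identical expected rewards over steps $1,\ldots,k+1$, which cancel. After that single transition both hybrids propagate with $\hat P$, so the reward-to-go is the model's value-to-go $\hat V^\pi_{T-k-1}$. Writing the difference out,
\[
G_{k+1}-G_k = \gamma^{k+1}\expect_{(s',a')\sim D^{k+1}_{s,a,\pi}}\Big[\textstyle\sum_{s''}\big(P^{a'}_{s'}(s'') - \hat P^{a'}_{s'}(s'')\big)\,\hat V^\pi_{T-k-1}(s'')\Big],
\]
where the $\gamma^{k+1}$ front factor is the discount accrued up to the mismatched transition. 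Choosing this direction of the switch (true first, then model) is exactly what makes the mismatched state-action distributed as the \emph{true} $D^{k+1}_{s,a,\pi}$, matching the statement.

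Applying H\"older's inequality bounds the inner sum by $\|P^{a'}_{s'} - \hat P^{a'}_{s'}\|_1\,\|\hat V^\pi_{T-k-1}\|_\infty$, and since rewards lie in $[0,M]$ we have $\|\hat V^\pi_{T-k-1}\|_\infty \le M(1-\gamma^{T-k-1})/(1-\gamma)$. Multiplying by $\gamma^{k+1}$ gives the coefficient $\frac{M}{1-\gamma}\gamma^{k+1}(1-\gamma^{T-k-1}) = \frac{M}{1-\gamma}(\gamma^{k+1}-\gamma^T)$. Substituting $t = k+1$ so that $t$ ranges over $1,\ldots,T-1$, summing, and applying the triangle inequality yields the pointwise bound with the claimed coefficients $(\gamma^t - \gamma^T)$. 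Finally I would take the expectation over $(s,a)\sim\xi$ and use $\expect_{(s,a)\sim\xi}\expect_{\,\cdot\,\sim D^t_{s,a,\pi}}[\,\cdot\,] = \expect_{\,\cdot\,\sim D^t_{\xi,\pi}}[\,\cdot\,]$ to collapse the root distribution into $D^t_{\xi,\pi}$, giving the lemma.

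The main obstacle is getting the hybrid construction to produce exactly the stated quantities rather than an off-by-one or model-distribution variant: the order of the switch controls whether the mismatched state-action is distributed under $P$ or $\hat P$, and the discount bookkeeping must be tracked carefully so that the per-step coefficient collapses to precisely $(\gamma^t - \gamma^T)$. Everything after that is the standard simulation-lemma / H\"older estimate.
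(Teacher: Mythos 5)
Your hybrid/telescoping argument is correct and is essentially the paper's own proof in different clothing: the paper derives the same one-transition-at-a-time switch by adding and subtracting $\expect_{s \sim P_{s_1}^{a_1}}[\hat{V}_{T-1}^{\pi}(s)]$ and unrolling the resulting recurrence, which produces exactly your terms $G_{k+1}-G_k$ with the mismatched pair distributed as $D^t_{\xi,\pi}$, the value-to-go bounded by $M(1-\gamma^{T-t})/(1-\gamma)$, and the coefficient $(\gamma^t-\gamma^T)$. No gaps; the bookkeeping you flag as the main obstacle works out exactly as you describe.
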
 

Combining Lemmas \ref{lem:mcvaluebound} and \ref{lem:onestepbound}
yields an overall bound on control performance in terms of the model's
prediction error. This result matches common MBRL practice; it
recommends minimizing the model's one-step prediction error. It
acknowledges that the model may be imperfect by allowing it to have
one-step error in unimportant (i.e. unvisited) states. However, if
limitations of the model class prevent the model from achieving low
error in important states, this bound can be quite loose, as the
following example illustrates.

\begin{figure}
\centering
\includegraphics[width=0.95\linewidth]{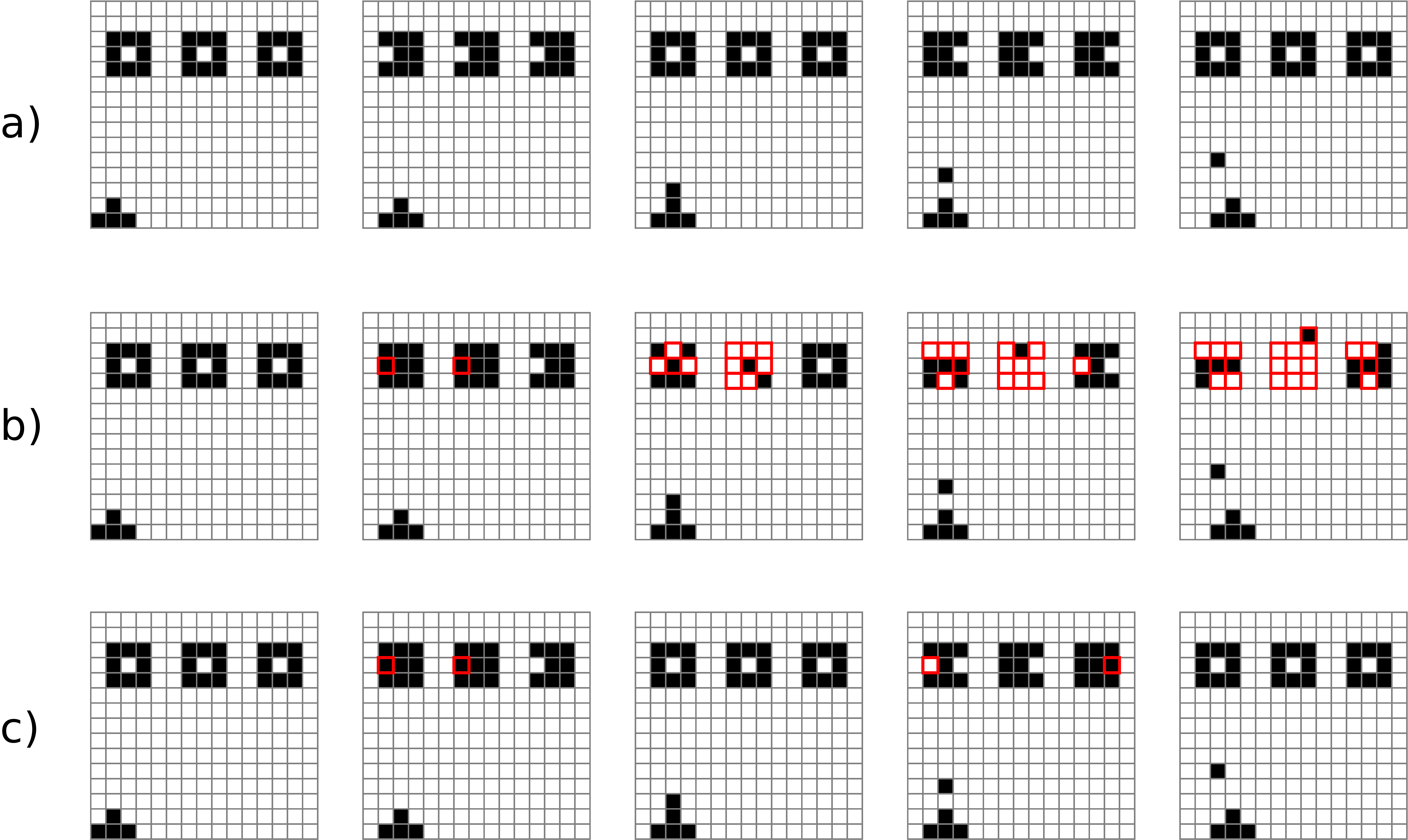}
\caption{The Shooter game. a) Example of the real dynamics. b)
  Propagating errors (red outlines) in a model
  optimized for one-step error. c) A model optimized for
  multi-step error.}
%\vspace{-.1in}
\label{fig:shooter}
\end{figure}

Consider the ``Shooter'' domain introduced by
\citenoun{talvitie2015agnostic}, pictured in Figure
\ref{fig:shooter}a. The agent moves a spaceship left and right at the
bottom of the screen. It can fire bullets upward, but each one has a
cost (-1 reward). If a bullet hits one of the three targets, the
agent receives 10 reward. Each target has a ``bullseye'' (the white
dots). If a bullet hits the same column as a bullseye, the agent
receives an additional 10 reward. Though the control problem is
simple, the state/observation space is high-dimensional due to the
many possible configurations of objects on the screen.

In the original Shooter the bullseyes remained still but here they
move back and forth across the targets. As such, the problem is
second-order Markov; when the bullseye is in the center, one cannot
predict its next position without knowing its previous position. The
agent, however, will use a factored Markov model, predicting each
pixel conditioned on the current image. It cannot accurately predict
the bullseyes' movement, though it can predict everything else
perfectly.

One might imagine that this limitation would be fairly minor; the
agent can still obtain reward even if it cannot reliably hit the
bullseyes. However, consider the sample rollout pictured in Figure
\ref{fig:shooter}b. Here each image is sampled from a model's one-step
predictions, and is then given as input for the next predictions. This
model has the lowest possible one-step prediction error. Still, as
anticipated, it does not correctly predict the movement of the
bullseyes in the second image. Because of the resulting errors, the
sampled image is unlike any the environment would generate, and
therefore unlike any the model has trained on. The model's uninformed
predictions based on this unfamiliar image cause more errors in the
third image, and so on. Ultimately this model assigns low probability
to a target persisting more than a few steps, making it
essentially useless for planning.

Note, however, that there {\em are} models within this model class
that are useful for planning. Consider the sample rollout pictured in
Figure \ref{fig:shooter}c. The model that generated this rollout makes
the same one-step errors as the previous model when given an
environment state. However, when it encounters an unreasonable sampled
state it still makes reasonable predictions, effectively
``self-correcting.''  \citenoun{talvitie2014model} presents several
similar examples involving various model deficiencies. These examples
illustrate the inadequacy of Lemma \ref{lem:onestepbound} when the
model class is limited. Models with similar one-step prediction error
can vary wildly in their usefulness for planning. The true
distinguisher is the accuracy of predictions far into the future.

\subsection{Multi-step prediction error}

Since $Q^\pi_T(s, a) = \sum_{t = 1}^{T}\gamma^{t-1} \expects_{(s', a') \sim
  D^t_{s, a, \pi}} R(s', a')$, it is straightforward to bound
$\epsilon_{val}^{\xi, \pi, T}$ in terms of {\em multi-step} error.
\begin{lem} \label{lem:multistep}
  For any policy $\pi$ and state-action distribution $\xi$,
\begin{align*}
\epsilon_{val}^{\xi, \pi, T} \le M\sum_{t = 1}^{T} \gamma^{t-1}
                             \expect_{(s, a) \sim \xi}\big[\|D^t_{s, a,
                             \pi} - \hat{D}^t_{s, a, \pi}\|_1\big].
\end{align*}
\end{lem}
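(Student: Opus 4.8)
The plan is to start directly from the definitions and reduce the value error to a per-step comparison of reward expectations under $D^t_{s,a,\pi}$ and $\hat D^t_{s,a,\pi}$. Substituting the stated expression for $Q_T^\pi$, together with the analogous expression for $\hat Q_T^\pi$ (which uses $\hat D^t$ in place of $D^t$), into the definition of $\epsilon_{val}^{\xi,\pi,T}$, the difference $Q_T^\pi(s,a) - \hat Q_T^\pi(s,a)$ becomes a single finite sum over $t$ of the terms $\gamma^{t-1}\big(\expects_{(s',a')\sim D^t_{s,a,\pi}} R(s',a') - \expects_{(s',a')\sim \hat D^t_{s,a,\pi}} R(s',a')\big)$. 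First I would apply the triangle inequality to pull the absolute value inside the sum over $t$, leaving a bound consisting of $\gamma^{t-1}$ times the absolute difference of the two reward expectations at each horizon $t$.

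The key step is to bound each of these per-step differences. Writing both expectations as sums against the state-action distributions, the difference at horizon $t$ is $\sum_{(s',a')}\big(D^t_{s,a,\pi}(s',a') - \hat D^t_{s,a,\pi}(s',a')\big) R(s',a')$. Since the reward is bounded in $[0,M]$, Hölder's inequality (equivalently, the standard estimate bounding the integral of a bounded function against a signed measure by the product of the function's sup-norm and the measure's $\ell_1$ norm) shows this is at most $\|R\|_\infty\,\|D^t_{s,a,\pi} - \hat D^t_{s,a,\pi}\|_1 \le M\,\|D^t_{s,a,\pi} - \hat D^t_{s,a,\pi}\|_1$.

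Substituting this estimate back gives, for each fixed $(s,a)$, the bound $|Q_T^\pi(s,a) - \hat Q_T^\pi(s,a)| \le M\sum_{t=1}^{T} \gamma^{t-1}\|D^t_{s,a,\pi} - \hat D^t_{s,a,\pi}\|_1$. Finally I would take the expectation over $(s,a)\sim\xi$ and interchange it with the finite sum by linearity, producing exactly the claimed inequality. There is no genuine obstacle here; the only point requiring care is the reward-expectation estimate, where one must keep track of the constant (using $\|R\|_\infty \le M$ to obtain the factor $M$) and recognize that it is the $\ell_1$ distance between $D^t_{s,a,\pi}$ and $\hat D^t_{s,a,\pi}$ that controls the error — the same total-variation argument that underlies Lemma~\ref{lem:onestepbound}, now applied at each horizon rather than one step at a time.
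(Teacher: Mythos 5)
Your proposal is correct and follows essentially the same route as the paper's proof: substitute the expansion of $Q_T^\pi$ and $\hat{Q}_T^\pi$ in terms of $D^t_{s,a,\pi}$ and $\hat{D}^t_{s,a,\pi}$, apply the triangle inequality across horizons $t$, and bound each per-step reward difference by $M\,\|D^t_{s,a,\pi}-\hat{D}^t_{s,a,\pi}\|_1$ before taking the expectation over $\xi$. No gaps.
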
 

The bound in Lemma \ref{lem:onestepbound} has dependence on
$\frac{1}{1 - \gamma}$ because it effectively assumes the worst
possible loss in value if the model samples an ``incorrect'' state. In
contrast, Lemma \ref{lem:multistep} accounts for the model's ability
to recover after an error, only penalizing it for individual incorrect
transitions. Unfortunately, it is difficult to directly optimize for
multi-step prediction accuracy. Nevertheless, this bound suggests that
algorithms that account for a model's multi-step error will yield more
robust MBRL performance.

\subsection{Hallucinated one-step prediction error}\label{sec:hallucinatedGeneral}

We now seek to formally analyze the practice of hallucinated
training, described in Section
\ref{sec:intro}. \citenoun{venkatraman2015improving} provide some
analysis but in the uncontrolled time series prediction setting. Here
we focus on its impact on control performance in MBRL. As a first
step, we derive a bound based on a model's ability to predict the next
environment state, given a state sampled from the model's own
predictions, i.e. to self-correct. For a policy $\pi$ and state-action
distribution $\xi$ let $J^t_{\xi, \pi}$ represent the {\em joint}
distribution over environment and model state-action pairs if $\pi$ is
executed in both simultaneously. Specifically, let
\begin{align*}
J^t_{\xi, \pi}(s, a, z, b) = \expects_{(s', a') \sim \xi} [D^{t}_{s',
  a', \pi}(s, a) \hat{D}^{t}_{s', a', \pi}(z, b)].
\end{align*}
\begin{lem} \label{lem:hallucinated}
  For any policy $\pi$ and state-action distribution $\xi$,
\begin{align*}
\epsilon_{val}^{\xi, \pi, T}\le
  M \sum_{t = 1}^{T-1} \gamma^t \expect_{(s, a, z, b) \sim J^{t}_{\xi, \pi}}
  \big[\|P_s^a - \hat{P}_z^b\|_1\big].
\end{align*}
\end{lem}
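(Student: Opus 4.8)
The plan is to build on the multi-step bound of Lemma~\ref{lem:multistep}, reducing the whole argument to a single one-step coupling estimate. Lemma~\ref{lem:multistep} already controls $\epsilon_{val}^{\xi,\pi,T}$ by $M\sum_{t=1}^{T}\gamma^{t-1}\expects_{(s,a)\sim\xi}[\|D^t_{s,a,\pi}-\hat D^t_{s,a,\pi}\|_1]$, so it suffices to show that each one-step-advanced term satisfies $\expects_{(s,a)\sim\xi}[\|D^{t+1}_{s,a,\pi}-\hat D^{t+1}_{s,a,\pi}\|_1]\le\expects_{J^t_{\xi,\pi}}[\|P_s^a-\hat P_z^b\|_1]$ and then re-index. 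The $t=1$ term of Lemma~\ref{lem:multistep} vanishes because $D^1_{s,a,\pi}$ and $\hat D^1_{s,a,\pi}$ both place all their mass on the starting pair $(s,a)$, so the surviving terms run from $t=2$ to $T$; substituting $\tau=t-1$ turns $\gamma^{t-1}$ into $\gamma^\tau$ and the sum into $M\sum_{\tau=1}^{T-1}\gamma^\tau\,\expects_{J^\tau_{\xi,\pi}}[\|P_s^a-\hat P_z^b\|_1]$, exactly the claimed statement, with the constant $M$ inherited directly from Lemma~\ref{lem:multistep}.

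The crux is the one-step coupling bound. First I would fix a single starting pair, so that $J^t$ is the independent product $D^t\otimes\hat D^t$, and express the advanced distributions as expectations against this measure. Writing $\tilde P^a_s(s'',a'')=P^a_s(s'')\pi_{s''}(a'')$ for the true next state-action kernel and $\hat{\tilde P}^b_z$ for the model's, one step of propagation gives $D^{t+1}=\expects_{(s,a)\sim D^t}[\tilde P^a_s]$ and $\hat D^{t+1}=\expects_{(z,b)\sim\hat D^t}[\hat{\tilde P}^b_z]$. The key observation is that, since the omitted variable integrates to one under the product coupling, each of these may be rewritten as an expectation over the \emph{same} joint distribution: $D^{t+1}=\expects_{J^t}[\tilde P^a_s]$ and $\hat D^{t+1}=\expects_{J^t}[\hat{\tilde P}^b_z]$. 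This converts a difference of two mixtures indexed over different variables into a single mixture of the pointwise differences $\tilde P^a_s-\hat{\tilde P}^b_z$.

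With both distributions written against $J^t$, convexity of the $\ell_1$ norm (the triangle inequality) yields $\|D^{t+1}-\hat D^{t+1}\|_1=\|\expects_{J^t}[\tilde P^a_s-\hat{\tilde P}^b_z]\|_1\le\expects_{J^t}[\|\tilde P^a_s-\hat{\tilde P}^b_z\|_1]$. A short calculation shows that applying the common policy $\pi$ to both kernels leaves the $\ell_1$ distance unchanged, since the shared factors $\pi_{s''}(a'')$ sum out: $\|\tilde P^a_s-\hat{\tilde P}^b_z\|_1=\sum_{s''}|P^a_s(s'')-\hat P^b_z(s'')|=\|P^a_s-\hat P^b_z\|_1$. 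This gives the per-term bound for a fixed start, and averaging over $(s,a)\sim\xi$ commutes with each step, turning the product coupling for a fixed start into $J^t_{\xi,\pi}$ as defined and completing the reduction.

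I expect the main obstacle to be conceptual rather than computational: recognizing that the \emph{independent} product coupling $J^t$ is precisely the device that lets both advanced distributions be averaged against one measure, so that comparing ``true dynamics from a true-rollout state'' to ``model dynamics from a model-rollout state'' is legitimate even though the two rollouts are never actually synchronized. Once that identity is in hand, the triangle inequality, the policy-invariance of the $\ell_1$ distance, and the re-indexing are all routine.
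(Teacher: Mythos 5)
Your proposal is correct and follows essentially the same route as the paper's proof: both start from Lemma~\ref{lem:multistep}, drop the vanishing $t=1$ term and re-index, use the ``multiply by the omitted marginal (which sums to one)'' identity to write $D^{t+1}-\hat{D}^{t+1}$ as a single mixture of $P_s^a-\hat{P}_z^b$ against the product coupling $J^t$, apply the triangle inequality, and observe that the common policy factors $\pi_{s'}(a')$ sum out of the $\ell_1$ norm. No gaps.
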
 
Inspired by ``Hallucinated Replay'' \citep{talvitie2014model}, we call
the quantity on the right the {\em hallucinated one-step
  error}. Hallucinated one-step error is intended as a proxy for
multi-step error, but having formalized it we may now see that in some
cases it is a poor proxy. Note that, regardless of the policy, the
multi-step and one-step error of a perfect model is 0. This is not
always so for hallucinated error.
\begin{prop} \label{prop:err}
  The hallucinated one-step error of a perfect model may be non-zero.
\end{prop}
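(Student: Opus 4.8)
The plan is to prove this by exhibiting a single stochastic MDP (with a policy, start distribution, and horizon) for which the perfect model $\hat{P} = P$ incurs strictly positive hallucinated one-step error. Since a perfect model satisfies $\hat{P}_z^b = P_z^b$ for every $(z,b)$, it also induces $\hat{D}^t = D^t$, and the hallucinated error collapses to $M\sum_{t=1}^{T-1}\gamma^t \expects_{(s,a,z,b)\sim J^t_{\xi,\pi}}\big[\|P_s^a - P_z^b\|_1\big]$. The crucial observation is that, by definition, $J^t_{\xi,\pi}$ draws a shared starting pair $(s',a')\sim\xi$ and then samples the environment pair $(s,a)$ and the model pair $(z,b)$ \emph{independently} from the same trajectory distribution $D^t_{s',a',\pi}$. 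In a stochastic environment these two rollouts diverge, so $(s,a)$ and $(z,b)$ need not coincide once $t\ge 2$, and wherever $P_s^a\ne P_z^b$ the summand is strictly positive. This is exactly the feature that separates hallucinated error from the multi-step error of Lemma \ref{lem:multistep}, which compares $D^t$ directly against $\hat{D}^t$ and therefore vanishes for a perfect model.

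To make this concrete I would first note that the $t=1$ term is always zero: $D^1_{s',a',\pi}$ is the point mass on $(s',a')$, so $J^1$ is supported on the diagonal $(s,a)=(z,b)$ and $\|P_s^a - P_z^b\|_1=0$; the error can only come from later, diverging steps. Then I would instantiate a minimal example: states $\{x,y\}$ with a single action $a$, where $x$ transitions to $x$ or $y$ with equal probability and $y$ is absorbing, so that $P_x^a\ne P_y^a$. With $\xi$ the point mass on $(x,a)$, the environment marginal at $t=2$ is $\tfrac12$ on $(x,a)$ and $\tfrac12$ on $(y,a)$. Because $J^2$ is the product of this marginal with itself, it assigns probability $\tfrac14$ to each ordered pair of landing states, and the two off-diagonal pairs each contribute $\|P_x^a - P_y^a\|_1>0$. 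Hence the $t=2$ summand, and therefore the whole sum for any $\gamma\in(0,1)$, $M>0$, and $T\ge 3$, is strictly positive, proving the claim.

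I expect the only real obstacle to be conceptual rather than computational: explaining \emph{why} an exact model can be penalized at all. The resolution is that $J^t$ charges the model's one-step prediction at a state reached by an \emph{independent} model rollout against the true dynamics at a state reached by the environment; under stochastic transitions these states differ, so even perfect dynamics incur error whenever distinct states have distinct successor distributions. Once this independence-induced divergence is isolated, the counterexample and its arithmetic are routine.
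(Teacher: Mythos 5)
Your proposal is correct and takes essentially the same route as the paper: both exhibit a tiny stochastic MDP with a fair-coin transition, observe that $J^t$ pairs independent environment and model rollouts so the two can land in states with different successor distributions, and conclude that the summand $\|P_s^a - \hat{P}_z^b\|_1$ is positive even for $\hat{P}=P$. Your version is if anything slightly more careful about the $t$-indexing (noting the $t=1$ term vanishes and requiring $T\ge 3$), but the counterexample and the underlying ``independence-induced decoupling'' argument are the same.
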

\begin{proof}
  Consider a simple MDP with three states $\{s_0, s_h, s_t\}$
  and a single action $a$. In the initial state $s_0$, a fair coin is
  flipped, transitioning to $s_h$ or $s_t$ with equal
  probability, where it stays forever. Consider a perfect
  model $\hat{P} = P$. Then
  $J^1_{s_0, a}(s_h, a, s_t, a) = P_{s_0}^a(s_h)P_{s_0}^a(s_t) = 0.25$.
  However, $|P_{s_h}^a(s_h) - P_{s_t}^a(s_h)| = 1 - 0 =
  1$. Thus, the hallucinated one-step error of a perfect model is
  non-zero.
\end{proof}
Here the environment samples heads and the model samples tails. Given
its own state, the model rightly predicts tails, but incurs error
nevertheless since the environment's next state is heads. Because the
model and environment dynamics are uncoupled, one cannot distinguish
between model error and legitimately different stochastic outcomes. As
such, the hallucinated error is misleading when the true dynamics are
stochastic. This corroborates the conjecture that Hallucinated Replay
may be problematic in stochastic environments
\citep{talvitie2014model}. Note that this observation applies not just
to hallucinated training, but to {\em any} method that attempts to
improve multi-step predictions by comparing sample rollouts from the
model and the environment.

While it may seem limiting to restrict our attention to deterministic
environments, this is still a large, rich class of problems. For
instance, \citenoun{oh2015action} learned models of Atari 2600 games,
which are fully deterministic \citep{hausknecht2014neuroevolution};
human players often perceive them as stochastic due to their
complexity. Similarly, in synthetic RL domains stochasticity is often
added to simulate complex, deterministic phenomena (e.g. robot wheels
slipping on debris), not necessarily to capture inherently stochastic
effects in the world. As in these examples, we shall assume that the
environment is deterministic but complex, so a limited agent will
learn an imperfect, stochastic model.

That said, even specialized to deterministic environments, the bound in
Lemma \ref{lem:hallucinated} is loose for arbitrary policies.
\begin{prop} \label{prop:deterr}
  The hallucinated one-step error of a perfect model may be non-zero,
  even in a deterministic MDP.
\end{prop}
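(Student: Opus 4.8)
The plan is to reuse the logic of Proposition \ref{prop:err} but relocate the source of divergence between the environment and model rollouts out of the transition dynamics and into the rollout policy. In a deterministic MDP with a perfect model $\hat P = P$, the hallucinated summand $\|P_s^a - \hat P_z^b\|_1 = \|P_s^a - P_z^b\|_1$ vanishes only when $(s,a)$ and $(z,b)$ induce the same successor, so I would exhibit a deterministic MDP together with a \emph{stochastic} policy $\pi$ under which the independently-sampled environment and model trajectories drawn from $J^t_{\xi,\pi}$ land, with positive probability, on state-action pairs whose next-state distributions differ. The enabling observation is that for a perfect model $\hat D^t_{s',a',\pi} = D^t_{s',a',\pi}$, so $J^t_{\xi,\pi}$ factors into two identical marginals sampled independently; a stochastic $\pi$ then lets the two samples diverge even though every transition is deterministic, exactly mirroring the role the coin flip played in Proposition \ref{prop:err}.

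Concretely, I would take states $\{s_0, r, s_h, s_t\}$ and actions $\{a_h, a_t\}$, with $s_0$ transitioning to the branching state $r$ under either action, with $r$ transitioning deterministically to $s_h$ under $a_h$ and to $s_t$ under $a_t$, and with $s_h, s_t$ absorbing. Let $\pi$ be uniform and let $\xi$ place all its mass on $(s_0, a_h)$. Then $D^1_{\xi,\pi}$ places mass $\tfrac12$ on each of $(r, a_h)$ and $(r, a_t)$, so $J^1_{\xi,\pi}$ assigns probability $\tfrac14$ to the pair $\big((r,a_h),(r,a_t)\big)$, at which $\|P_r^{a_h} - P_r^{a_t}\|_1 = 2$. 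Summed over the two ``crossing'' outcomes this yields expectation $1$, so for $T \ge 2$ and $\gamma > 0$ the hallucinated one-step error is strictly positive while the MDP is deterministic and the model is perfect.

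The main subtlety is that $J^t_{\xi,\pi}$ couples the environment and model on the \emph{same} initial $(s',a')$ drawn from $\xi$ — the expectation over $\xi$ sits outside the product — so neither a stochastic $\xi$ nor the forced first transition can by itself create divergence; the two rollouts can only separate through the policy's independent action choices at steps $t \ge 1$. I therefore have to arrange the dynamics so that the \emph{first} such free choice already lands the rollouts on state-action pairs with different successors, which is why $s_0$ feeds into the branching state $r$ rather than letting $\pi$ first branch at an absorbing state (where both actions share a successor and the term would vanish). Beyond verifying that $\hat D^t = D^t$ genuinely makes the two factors identical marginals and that the discount weight $\gamma^t$ preserves the offending term, I expect no step to present real difficulty.
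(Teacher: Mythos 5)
Your proof is correct and takes essentially the same route as the paper's: it deterministifies the coin MDP and relocates the stochasticity into the policy, so that the independently sampled action choices in the two factors of $J^t_{\xi,\pi}$ decouple the environment and model rollouts. Your insertion of the intermediate state $r$ --- ensuring that the branching happens at the policy's first \emph{independent} choice rather than at the shared initial action drawn from $\xi$ --- makes explicit a subtlety that the paper's one-line proof glosses over, and the computation ($\tfrac14\cdot 2$ for each of the two crossing pairs, giving expectation $1$) is right.
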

\begin{proof}
  Alter the coin MDP, giving the agent two actions which fully
  determine the coin's orientation. The original dynamics can be
  recovered via a stochastic policy that randomly selects $s_h$ or
  $s_t$ and then leaves the coin alone.
\end{proof}
\citenoun{oh2015action} tied action selection to the environment state
only (rather than independently selecting actions in the environment
and model). This prevents stochastic decoupling but may fail to train the
model on state-action pairs that the policy would reach under the
model's dynamics. 

\subsection{A Tighter Bound}\label{sec:hallucinatedDetBlind}

In the remainder of the paper we assume that the environment is
deterministic. Let $\sigma_{s}^{a_{1:t}}$ be the unique state that
results from starting in state $s$ and taking the action sequence
$a_{1:t}$. The agent's model will still be stochastic. 

Recall that our goal is to bound the value error under the one-ply MC
rollout policy. Proposition \ref{prop:deterr} shows that hallucinated
error gives a loose bound under arbitrary policies. We now focus on
{\em blind policies} \citep{06icml-psr-exploration}. A blind policy
depends only on the action history, i.e.
$\pi(a_t \mid s_t, a_{1:t-1}) = \pi(a_t \mid a_{1:t-1})$. This class
of policies ranges from stateless policies to open-loop action
sequences. It includes the uniform random policy, a common rollout
policy.

For any blind policy $\pi$ and
state-action distribution $\xi$, let $H^t_{\xi, \pi}$ be the
distribution over environment state, model state, and action if
a single action sequence is sampled from $\pi$ and then executed in
both the model and the environment. So,
\begin{align*}
&H^1_{\xi, \pi}(s_1, z_1, a_1) =\xi(s_1, a_1) \text{ when } z_1 = s_1
  \text{ (0
otherwise)};\\
&H^2_{\xi, \pi}(s_2, z_2, a_2) = \expects_{(s_1, a_1) \sim
  \xi}[\pi(a_2 \mid a_1) P_{s_1}^{a_1}(s_2)\hat{P}_{s_1}^{a_1}(z_s)];\\
&\text{and for $t > 2$, } H^t_{\xi, \pi}(s_t, z_t, a_t) =\\
&\hspace{0.05in} \expects_{(s_1, a_1) \sim \xi}
\Big[\textstyle\sum_{a_{2:t-1}} \pi(a_{2:t} \mid a_1) P_{s_1}^{a_{1:t-1}}(s_t)
\hat{P}_{s_1}^{a_{1:t-1}}(z_t)\Big].
\end{align*}
\begin{lem} \label{lem:detblind}
  If $P$ is deterministic, then for any blind policy $\pi$ and any
  state-action distribution $\xi$,
\begin{align*}
\epsilon_{val}^{\xi, \pi, T} \le 2M \sum_{t = 1}^{T-1}\gamma^t
  \expect_{(s, z, a) \sim H^{t}_{\xi, \pi}} \big[1 - \hat{P}_z^a(\sigma^a_s)\big].
\end{align*}
\end{lem}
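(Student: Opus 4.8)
The plan is to build on Lemma~\ref{lem:multistep}, which already reduces the value error $\epsilon_{val}^{\xi,\pi,T}$ to a discounted sum of the multi-step distribution errors $\expect_{(s,a)\sim\xi}\|D^t_{s,a,\pi} - \hat{D}^t_{s,a,\pi}\|_1$. It therefore suffices to show, for each $t$, that $\expect_{(s,a)\sim\xi}\|D^t_{s,a,\pi} - \hat{D}^t_{s,a,\pi}\|_1 \le 2\,\expect_{(s,z,a)\sim H^{t-1}_{\xi,\pi}}[1 - \hat{P}_z^a(\sigma_s^a)]$. Reindexing the resulting sum with $\tau = t-1$ then produces exactly the claimed bound, where the $t=1$ term vanishes because $D^1_{s,a,\pi} = \hat{D}^1_{s,a,\pi}$ is the point mass at $(s,a)$.

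First I would exploit that $\pi$ is \emph{blind} in order to couple the environment and the model through a shared action sequence. Because actions depend only on the action history, I can write $D^t_{s,a,\pi}(s',a') = \sum_{a_{2:t}}\pi(a_{2:t}\mid a_1)\,\mathbf{1}\{s' = \sigma_s^{a_{1:t-1}}\}\,\mathbf{1}\{a'=a_t\}$ and, likewise, $\hat{D}^t_{s,a,\pi}(s',a') = \sum_{a_{2:t}}\pi(a_{2:t}\mid a_1)\,\hat{P}_s^{a_{1:t-1}}(s')\,\mathbf{1}\{a'=a_t\}$, so the two distributions differ only in their state component while sharing identical action weights. Applying the triangle inequality to move the $\ell_1$ norm inside the sum over action sequences, each fixed sequence contributes the distance between the \emph{point mass} at the deterministic environment state $\sigma_s^{a_{1:t-1}}$ and the model's distribution $\hat{P}_s^{a_{1:t-1}}$. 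The $\ell_1$ distance between a point mass and a distribution equals $2(1 - \hat{P}_s^{a_{1:t-1}}(\sigma_s^{a_{1:t-1}}))$, which is the source of the factor $2M$. This yields $\expect_{(s,a)\sim\xi}\|D^t - \hat{D}^t\|_1 \le 2\,\expect_{(s,a)\sim\xi,\,a_{2:t-1}\sim\pi}[1 - \hat{P}_s^{a_{1:t-1}}(\sigma_s^{a_{1:t-1}})]$.

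The crux is then to re-express the model's step-$t$ ``divergence probability'' $1 - \hat{P}_s^{a_{1:t-1}}(\sigma_s^{a_{1:t-1}})$ as a one-step self-correction failure under $H^{t-1}_{\xi,\pi}$. I would invoke the Markov property of the model to write its step-$t$ state as a one-step push-forward of the joint step-$(t-1)$ state: conditioned on a triple $(s,z,a)\sim H^{t-1}_{\xi,\pi}$, the environment moves deterministically to $\sigma_s^a$ and the model lands there with probability $\hat{P}_z^a(\sigma_s^a)$, so $\Pr[z_t = s_t] = \expect_{(s,z,a)\sim H^{t-1}_{\xi,\pi}}[\hat{P}_z^a(\sigma_s^a)]$. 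Because the model's step-$t$ state, marginalized over intermediate model states, is distributed as $\hat{P}_s^{a_{1:t-1}}$, this same probability also equals $\expect_{(s,a)\sim\xi,\,a_{2:t-1}\sim\pi}[\hat{P}_s^{a_{1:t-1}}(\sigma_s^{a_{1:t-1}})]$. Equating the two forms gives $\expect_{\xi,\pi}[1 - \hat{P}_s^{a_{1:t-1}}(\sigma_s^{a_{1:t-1}})] = \expect_{H^{t-1}_{\xi,\pi}}[1 - \hat{P}_z^a(\sigma_s^a)]$, and substituting back into the previous step and into Lemma~\ref{lem:multistep} finishes the proof.

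I expect the main obstacle to be the bookkeeping in this final identity: verifying that the joint process $H^{t-1}_{\xi,\pi}$ really does have its model marginal at step $t$ equal to $\hat{P}_s^{a_{1:t-1}}$, and that no decoupling of the two action sequences slips in. This is precisely where the blind-policy assumption is indispensable, since it guarantees that the \emph{same} action sequence drives both processes, so the environment's point mass and the model's distribution are always compared under identical actions. Determinism then ensures the per-step target is a single state $\sigma_s^a$, turning the comparison into the self-correction quantity $1 - \hat{P}_z^a(\sigma_s^a)$ rather than the stochastic decoupling error exhibited in Proposition~\ref{prop:deterr}.
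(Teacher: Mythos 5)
Your proof is correct, and it establishes the same chain as the paper: reduce to Lemma~\ref{lem:multistep}, kill the $t=1$ term, bound each $\|D^t_{s,a,\pi}-\hat{D}^t_{s,a,\pi}\|_1$ by a self-correction quantity at depth $t-1$, and reindex. The middle step is organized differently, though. The paper stays inside the $\ell_1$ norm: it introduces the product coupling $P_{s_1}^{a_{1:t-2}}(s)\hat{P}_{s_1}^{a_{1:t-2}}(z)$ by multiplying each term by $\sum_z\hat{P}_{s_1}^{a_{1:t-2}}(z)=1$ (and symmetrically), applies the triangle inequality over the joint $(s,z)$, and only then uses determinism of the \emph{last} transition to evaluate $\sum_{s_t}|P_s^{a}(s_t)-\hat{P}_z^{a}(s_t)|=2(1-\hat{P}_z^a(\sigma_s^a))$. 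You instead use determinism of the \emph{whole} environment trajectory to write $D^t$ as a mixture of point masses over action sequences, bound $\|D^t-\hat{D}^t\|_1$ by $2\,\expect[1-\hat{P}_s^{a_{1:t-1}}(\sigma_s^{a_{1:t-1}})]$ (the $t$-step miss probability), and then prove the exact identity $\expect[1-\hat{P}_s^{a_{1:t-1}}(\sigma_s^{a_{1:t-1}})]=\expect_{H^{t-1}_{\xi,\pi}}[1-\hat{P}_z^a(\sigma_s^a)]$ via the model's Markov chain rule and the fact that the environment marginal of $H^{t-1}$ is a point mass given the actions. That identity is valid (it is where blindness and determinism both enter, exactly as you flag), and the two routes yield identical intermediate expressions. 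What your version buys is a cleaner interpretation --- the per-depth hallucinated error literally equals the probability that the model's $t$-step rollout lands off the true trajectory --- at the cost of an extra marginalization identity to verify; the paper's coupling manipulation is the one that generalizes to the non-deterministic Lemma~\ref{lem:hallucinated}, which is presumably why it is phrased that way.
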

We can also show that, in the deterministic setting, Lemma
\ref{lem:detblind} gives an upper bound for multi-step error (Lemma
\ref{lem:multistep}) and a lower bound for one-step error (Lemma
\ref{lem:onestepbound}). 
\begin{theorem} \label{thm:tightness}
  If $P$ is deterministic, then for any blind policy $\pi$ and any
  state-action distribution $\xi$,
\begin{align*}
\epsilon_{val}^{\xi, \pi, T} &\le\ M\sum_{t = 1}^{T} \gamma^{t-1}
                             \expect_{(s, a) \sim \xi}\big[\|D^t_{s, a,
                             \pi} - \hat{D}^t_{s, a, \pi}\|_1\big]\\
&\le
 2M \sum_{t = 1}^{T-1}\gamma^t
  \expect_{(s, z, a) \sim H^{t}_{\xi, \pi}} \big[1 -
  \hat{P}_z^a(\sigma^a_s)\big]\\
&\le \frac{2M}{1 - \gamma}\sum_{t = 1}^{T-1} (\gamma^{t} - 
\gamma^{T})\expect_{(s, a) \sim
  D^t_{\xi, \pi}}\big[1 - \hat{P}_s^a(\sigma^a_s)\big].
\end{align*}
\end{theorem}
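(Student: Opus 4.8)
The first inequality is exactly Lemma \ref{lem:multistep}, so the real work is the two new arrows: from the multi-step error (line 1) to the $H$-based hallucinated bound (line 2), and from there to the $D$-based one-step-style bound (line 3). The organizing observation I would use is that, because $\pi$ is blind, $H^t_{\xi,\pi}$ is a \emph{coupling} of $D^t_{\xi,\pi}$ and $\hat{D}^t_{\xi,\pi}$: a single action sequence $a_{1:t}$ is drawn from $\xi$ and $\pi$ and then run forward in both the environment and the model from the common start $s_1 = z_1$, so summing out $z_t$ from $H^t_{\xi,\pi}$ recovers $D^t_{\xi,\pi}(s_t,a_t)$ while summing out $s_t$ recovers $\hat{D}^t_{\xi,\pi}(z_t,a_t)$. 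The factor $\pi(a_{2:t}\mid a_1)$ is identical in both rollouts, which is precisely where blindness is needed. By the coupling characterization of total variation, $\|D^t_{s,a,\pi} - \hat{D}^t_{s,a,\pi}\|_1 \le 2\Pr_{H^t_{s,a,\pi}}[s_t \ne z_t]$ for each fixed start $(s,a)$, and averaging over $\xi$ gives $\expect_{(s,a)\sim\xi}\big[\|D^t_{s,a,\pi}-\hat{D}^t_{s,a,\pi}\|_1\big] \le 2 d_t$ with $d_t := \Pr_{H^t_{\xi,\pi}}[s \ne z]$.

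For the second inequality the crux is an exact identity. Conditioning on step $t$ and using that the environment is deterministic (so the environment successor is the single state $\sigma^{a_t}_{s_t}$) with actions shared across the two rollouts, the model mismatches the environment at step $t{+}1$ exactly when its sampled transition misses $\sigma^{a_t}_{s_t}$. Hence $d_{t+1} = \expect_{(s,z,a)\sim H^t_{\xi,\pi}}\big[1-\hat{P}^a_z(\sigma^a_s)\big]$, with $d_1 = 0$; call this quantity $c_t$. Substituting $\|D^t - \hat{D}^t\|_1 \le 2 d_t$ into line 1 and reindexing the discounted sum (using $d_1 = 0$ and $d_{t+1} = c_t$), the sum $\sum_t \gamma^{t-1}\cdot 2 d_t$ collapses term-for-term onto $2\sum_t \gamma^t c_t$, which is line 2. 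Crucially, no $\frac{1}{1-\gamma}$ blow-up appears here, because the mismatch probability at step $t{+}1$ \emph{equals} the hallucinated error at step $t$ rather than a cumulative sum of past errors.

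For the third inequality I would split $H^t_{\xi,\pi}$ according to whether $s = z$. On $\{s = z\}$ the integrand is the genuine one-step error $1-\hat{P}^a_s(\sigma^a_s)$, and since the $(s,a)$-marginal of this sub-distribution is dominated by $D^t_{\xi,\pi}$, that part contributes at most $e_t := \expect_{(s,a)\sim D^t_{\xi,\pi}}\big[1-\hat{P}^a_s(\sigma^a_s)\big]$. On $\{s \ne z\}$ I bound the integrand by $1$, contributing at most $\Pr_{H^t_{\xi,\pi}}[s \ne z] = d_t = c_{t-1}$. This yields the recursion $c_t \le e_t + c_{t-1}$ with $c_0 = 0$, hence $c_t \le \sum_{\tau=1}^t e_\tau$. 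Plugging this into $2\sum_t \gamma^t c_t$ and swapping the order of summation turns the inner geometric sum $\sum_{t=\tau}^{T-1}\gamma^t$ into $\frac{\gamma^\tau - \gamma^T}{1-\gamma}$, producing exactly line 3.

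The hard part is establishing the two structural facts that make the chain tight, rather than the bookkeeping around them. The first is that $H^t$ carries the correct environment \emph{and} model marginals; this is exactly what fails for non-blind policies (cf. Proposition \ref{prop:deterr}), so I expect the blindness hypothesis to be doing essential work in that step. The second is the exact transfer identity $d_{t+1} = c_t$, which relies on determinism so that the environment successor is the single point $\sigma^a_s$. Everything else is routine: the coupling/total-variation inequality, a one-line recursion, and reindexing geometric sums. I would also record the deterministic identity $\|P^a_s - \hat{P}^a_s\|_1 = 2\big(1-\hat{P}^a_s(\sigma^a_s)\big)$, which is what makes line 3 literally a one-step error bound of the same form as Lemma \ref{lem:onestepbound}.
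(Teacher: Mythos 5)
Your proposal is correct and follows essentially the same route as the paper's proof: the first inequality is Lemma \ref{lem:multistep}, the second is the content of Lemma \ref{lem:detblind} (your coupling/total-variation framing of $H^t_{\xi,\pi}$ together with the exact identity $d_{t+1}=c_t$ is just a cleaner packaging of the paper's explicit expansion over action sequences and the identity $\|P^a_s-\hat{P}^a_z\|_1 = 2(1-\hat{P}^a_z(\sigma^a_s))$), and your third step---splitting $H^t$ on $\{s=z\}$ versus $\{s\ne z\}$, dominating the diagonal marginal by $D^t_{\xi,\pi}$, bounding the off-diagonal mass by the previous step's hallucinated error, and unrolling the resulting recursion into the geometric factor $\frac{\gamma^t-\gamma^T}{1-\gamma}$---is exactly the paper's argument. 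You also correctly identify where blindness (the coupling property of $H^t$) and determinism (the transfer identity and the $\ell_1$ identity) are each used.
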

Thus, with a deterministic environment and a blind rollout policy, the
hallucinated one-step error of the model is more tightly related to
MBRL performance than the standard one-step error. This is the
theoretical reason for the empirical success of Hallucinated Replay
\citep{talvitie2014model}, which trains the model to predict the next
environment state, given its own samples as input. We now exploit this
fact to develop a novel MBRL algorithm that similarly uses
hallucinated training to mitigate the impact of model class
limitations and that offers strong theoretical guarantees.

\section{Hallucinated DAgger-MC} \label{sec:hdaggermc}

The ``Data Aggregator'' (DAgger) algorithm \citep{ross2012agnostic} was
the first practically implementable MBRL algorithm with performance
guarantees agnostic to the model class. It did, however, require that
the planner be near optimal. DAgger-MC \citep{talvitie2015agnostic}
relaxed this assumption, accounting for the limitations of the planner
that uses the model (one-ply MC). This section augments
DAgger-MC to use hallucinated training, resulting in
the Hallucinated DAgger-MC algorithm\footnote{``Is this a dagger which I see before me,\\
The handle toward my hand? Come, let me clutch thee.\\
I have thee not, and yet I see thee still.\\
Art thou not, fatal vision, sensible\\
To feeling as to sight? Or art thou but\\
A dagger of the mind, a false creation,\\
Proceeding from the heat-oppress'd brain?'' [{\em Macbeth}
2.1.33--39]}, or H-DAgger-MC (Algorithm \ref{alg:HDAggerMC}).

In addition to assuming a particular form for the planner (one-ply MC
with a blind rollout policy), H-DAgger-MC assumes that the model will
be ``unrolled'' (similar to,
e.g. \citenop{Abbeel:2006:UIM:1143844.1143845}). Rather than learning
a single model $\hat{P}$, H-DAgger-MC learns a set of models
$\{\hat{P}^1, \ldots, \hat{P}^{T-1}\} \subseteq \mathcal{C}$, where
model $\hat{P}^i$ is responsible for predicting the outcome of step
$i$ of a rollout, given the state sampled from $\hat{P}^{i-1}$ as
input. The importance of learning an unrolled model will be
discussed more deeply in Section \ref{sec:unrolled}.

Much of the H-DAgger-MC algorithm is identical to DAgger-MC. The main
difference lies in lines \ref{line:startroll}-\ref{line:endroll}, in
which $\rho$ is executed in both the environment and the model to
generate hallucinated examples. This trains the model to self-correct
during rollouts. Like DAgger and DAgger-MC, H-DAgger-MC requires the
ability to reset to the initial state distribution
$\mu$ and also the ability to reset to an ``exploration distribution''
$\nu$. The exploration distribution ideally ensures that the agent
will encounter states that would be visited by a good policy,
otherwise no agent could promise good performance. The performance
bound for H-DAgger-MC will depend in part on the quality of the
selected $\nu$.

\begin{algorithm}[t]
  \caption{Hallucinated DAgger-MC}\label{alg:HDAggerMC}
  \begin{algorithmic}[1]
    \Require Exploration distribution $\nu$, \textsc{Online-Learner}{},
    \Call{MC-Planner}{} (blind rollout policy $\rho$, rollout depth
    $T$), num. iterations $N$, num. rollouts per iteration $K$
    \State Get initial datasets $\mathcal{D}^{1:T-1}_1$ (maybe using $\nu$)
    \State Initialize $\hat{P}^{1:T-1}_1 \gets$ \Call{Online-Learner}{$\mathcal{D}^{1:T-1}_1$}.
    \State Initialize $\hat{\pi}_1 \gets$ \Call{MC-Planner}{$\hat{P}^{1:T-1}_1$}.
    \For{$n \gets 2 \ldots N$}
    \For{$k \gets 1 \ldots K$}
    \State With probability... \label{line:startxi} \hfill $\triangleright$ Sample $(x, b) \sim \xi_n$...
    \State \hspace{\algorithmicindent} $\nicefrac{1}{2}$: Sample $(x, b) \sim D_{\mu}^{\hat{\pi}_{n-1}}$
    \State \hspace{\algorithmicindent} $\nicefrac{1}{4}$: Reset to $(x, b) \sim \nu$.
    \State \hspace{\algorithmicindent} $\nicefrac{(1-\gamma)}{4}$: Sample $x \sim \mu$, $b \sim
     \hat{\pi}_{n-1}(\cdot \mid x)$. 
     \State \hspace{\algorithmicindent} $\nicefrac{\gamma}{4}$: 
     \State \hspace{\algorithmicindent}\hspace{\algorithmicindent} Reset to $(y, c)
     \sim \nu$
     \State \hspace{\algorithmicindent}\hspace{\algorithmicindent}
     Sample $x \sim P(\cdot \mid y, c)$, $b \sim \hat{\pi}_{n-1}(\cdot \mid x)$
    \State Let $s \gets x$, $z \gets x$, $a \gets b$.\label{line:endxi}
    \For{$t \leftarrow 1\ldots T-1$} \label{line:startroll}\hfill
    $\triangleright$ Sample from $H^t_n$...
    \State Sample $s' \sim P(\cdot \mid s, a)$.
    \State Add $\langle z, a, s' \rangle$ to
    $\mathcal{D}^t_n$. \hfill $\triangleright$ Hallucinated training
    \Statex \hfill $\triangleright$ (DAgger-MC adds $\langle s, a, s' \rangle$ instead).
    \State Sample $z' \sim \hat{P}^t_{n-1}(\cdot \mid z, a)$.
    \State Let $s \gets s'$, $z \gets z'$, and sample $a \sim \rho$.\label{line:endroll}
    \EndFor
    \EndFor
    \State $\hat{P}^{1:T-1}_n \gets$
    \Call{Online-Learner}{$\hat{P}^{1:T-1}_{n-1}$,
      $\mathcal{D}^{1:T-1}_n$} 
    \State $\hat{\pi}_n \gets$ \Call{MC-Planner}{$\hat{P}^{1:T-1}_n$}.
    \EndFor
    \State \Return the sequence $\hat{\pi}_{1:N}$
  \end{algorithmic}
\end{algorithm}

We now analyze H-DAgger-MC, adapting \citenoun{ross2012agnostic}'s DAgger
analysis. Let $H^t_n$ be the distribution from which H-DAgger-MC
samples a training triple at depth $t$ (lines
\ref{line:startxi}-\ref{line:endxi} to pick an initial state-action
pair, lines \ref{line:startroll}-\ref{line:endroll} to
roll out). Define the error of the model at depth $t$ to be
$\bar{\epsilon}^{t}_{prd} = \frac{1}{N}\sum_{n = 1}^N \expects_{(s, z,
  a) \sim H^t_n} [1 -\hat{P}_{n}^{t}(\sigma_{s}^{a} \mid z, a)]$.

For a policy $\pi$, let
$c_{\nu}^{\pi} = \sup_{s,a} \frac{D_{\mu, \pi}(s,a)}{\nu(s,a)}$
represent the mismatch between the discounted state-action
distribution under $\pi$ and the exploration distribution $\nu$. Now,
consider the sequence of policies $\hat{\pi}_{1:N}$ generated by
H-DAgger-MC. Let $\bar{\pi}$ be the uniform mixture over all policies
in the sequence. Let
$\bar{\epsilon}_{mc} = \frac{1}{N} \frac{4}{1 - \gamma} \sum_{n=1}^N
(\|\bar{Q}_n - \hat{Q}^{\rho}_{T,n}\|_{\infty} + \frac{2}{1 - \gamma} \|BV^{\rho}_T -
V^{\rho}_T\|_{\infty} $
be the error induced by the choice of planning algorithm, averaged
over all iterations.
\begin{lem} \label{lem:prederror}
  In H-DAgger-MC, the policies $\hat{\pi}_{1:N}$ are such that for any
  policy $\pi$,
\begin{align*}
 \expect_{s
  \sim \mu}\big[V^{\pi}(s) - V^{\bar{\pi}}(s)\big] \le \frac{8
  M}{1-\gamma} c_{\nu}^{\pi} \sum_{t = 1}^{T-1}
  \bar{\epsilon}^t_{prd} + \bar{\epsilon}_{mc}.
\end{align*}
\end{lem}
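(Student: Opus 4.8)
The plan is to follow a DAgger-style reduction, combining the per-policy value bound of Lemma \ref{lem:mcvaluebound} with the tight deterministic-blind bound of Lemma \ref{lem:detblind}, and then averaging over the $N$ iterations. First I would note that because $\bar{\pi}$ is the uniform mixture over $\hat{\pi}_{1:N}$, its value decomposes as $V^{\bar{\pi}}(s) = \frac{1}{N}\sum_n V^{\hat{\pi}_n}(s)$, so that
\begin{align*}
\expect_{s\sim\mu}\big[V^\pi(s)-V^{\bar{\pi}}(s)\big] = \frac{1}{N}\sum_{n=1}^N \expect_{s\sim\mu}\big[V^\pi(s)-V^{\hat{\pi}_n}(s)\big].
\end{align*}
It then suffices to bound each per-iteration suboptimality. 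Since $\hat{\pi}_n$ is greedy with respect to the one-ply MC value $\bar{Q}_n$ computed from $\hat{P}_n$ with blind rollout policy $\rho$, Lemma \ref{lem:mcvaluebound} applies and bounds this term by $\frac{4}{1-\gamma}\epsilon_{val}^{\xi^*_n,\rho,T} + \epsilon_{mc,n}$, where $\xi^*_n$ is the specific mixture of $D_{\mu,\hat{\pi}_n}$, $D_{\mu,\pi}$, and the one-step-lookahead term prescribed by that lemma, and $\epsilon_{mc,n}$ is the corresponding planning-error term. Because the environment is deterministic and $\rho$ is blind, Lemma \ref{lem:detblind} converts the value error into hallucinated one-step error, $\epsilon_{val}^{\xi^*_n,\rho,T} \le 2M\sum_{t=1}^{T-1}\gamma^t\expect_{(s,z,a)\sim H^t_{\xi^*_n,\rho}}[1-\hat{P}_z^a(\sigma_s^a)]$, producing the constant $\frac{8M}{1-\gamma}$ and reducing the problem to controlling this hallucinated error.

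The second step is to identify the theoretical distribution $H^t_{\xi^*_n,\rho}$ with the distribution $H^t_n$ that the algorithm actually samples in lines \ref{line:startroll}--\ref{line:endroll}. By construction those lines execute a shared action sequence drawn from $\rho$ in both the environment and the model, so $H^t_n$ is exactly the distribution $H^t_{\xi_n,\rho}$ of Lemma \ref{lem:detblind} seeded by the initial distribution $\xi_n$ of lines \ref{line:startxi}--\ref{line:endxi}. Comparing $\xi_n$ with $\xi^*_n$, the only discrepancy is that the two components of $\xi^*_n$ built from the unsamplable comparison distribution $D_{\mu,\pi}$ are replaced in the algorithm by the exploration distribution $\nu$. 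Since $D_{\mu,\pi}(s,a)\le c_\nu^\pi\,\nu(s,a)$ pointwise by definition of $c_\nu^\pi$, and since the map from an initial state-action distribution to $H^t$ is linear and monotone while the error $1-\hat{P}_z^a(\sigma_s^a)$ is nonnegative, each expectation under $H^t_{\xi^*_n,\rho}$ is at most $c_\nu^\pi$ (using $c_\nu^\pi\ge1$ for the components sampled exactly) times the corresponding expectation under $H^t_n$. Relaxing $\gamma^t\le1$, averaging over $n$, and recognizing $\frac{1}{N}\sum_n\expect_{H^t_n}[1-\hat{P}_z^a(\sigma_s^a)]$ as $\bar{\epsilon}^t_{prd}$ and $\frac{1}{N}\sum_n\epsilon_{mc,n}$ as $\bar{\epsilon}_{mc}$ assembles the claimed bound.

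The main obstacle is the self-referential dependence built into Lemma \ref{lem:mcvaluebound}: the distribution $\xi^*_n$ under which the model error must be controlled itself depends on $D_{\mu,\hat{\pi}_n}$, the distribution of the very policy whose value is being bounded, whereas the algorithm only has $\hat{\pi}_{n-1}$ and $\hat{P}_{n-1}$ available when collecting iteration-$n$ data. Closing this loop is precisely what the DAgger reduction accomplishes: the iterate-by-iterate collection ensures that, in aggregate, the model is trained on distributions matching those induced by the produced policies, so the averaged on-distribution error $\frac{1}{N}\sum_n\bar{\epsilon}^t_{prd}$ controls the averaged suboptimality even though no single iteration's distribution matches its own policy. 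I would therefore concentrate the care on aligning the iteration indices of $\xi^*_n$, $H^t_n$, and $\hat{P}_n$ and on checking that the averaging leaves the mismatch factor $c_\nu^\pi$ outside a single clean sum over $t$; the remaining manipulations (the mixture decomposition, the $\gamma^t\le1$ relaxation, and collecting constants) are routine.
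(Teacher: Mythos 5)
Your proposal is correct and follows essentially the same route as the paper's proof: decompose over the uniform mixture, apply Lemma \ref{lem:mcvaluebound} per iteration, convert value error to hallucinated one-step error via Lemma \ref{lem:detblind}, bound the $D_{\mu,\pi}$ components of the mixture distribution by $c_\nu^\pi\,\nu$ (using $c_\nu^\pi\ge 1$ for the rest and linearity/monotonicity of the rollout distribution in its seed), relax $\gamma^t\le 1$, and average to recognize $\bar{\epsilon}^t_{prd}$ and $\bar{\epsilon}_{mc}$. No substantive differences from the paper's argument.
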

Note that this result holds for {\em any} comparison policy
$\pi$. Thus, if $\bar{\epsilon}_{mc}$ is small and the learned models have
low hallucinated one-step prediction error, then if $\nu$ is similar
to the state-action distribution under {\em some} good policy,
$\bar{\pi}$ will compare favorably to it. Like the original DAgger and
DAgger-MC results, Lemma \ref{lem:prederror} has limitations. It uses
the L1 loss, which is not always a practical learning objective. It
also assumes that the expected loss at each iteration can be computed
exactly (i.e. that there are infinitely many samples per
iteration). It also applies to the average policy $\bar{\pi}$, rather
than the last policy in the sequence. \citenoun{ross2012agnostic}
discuss extensions that address more practical loss functions, finite
sample bounds, and results for $\hat{\pi}_N$.

The next question is, of course, when will the learned models be
accurate? Following \citenoun{ross2012agnostic} note that
$\bar{\epsilon}^t_{prd}$ can be interpreted as the average loss of an
online learner on the problem defined by the aggregated datasets at
each iteration. In that case, for each horizon depth $t$ let
$\bar{\epsilon}^{t}_{mdl}$ be the error of the best model in $\mathcal{C}$
under the training distribution at that depth, in
retrospect. Specifically,
$\bar{\epsilon}^{t}_{mdl} = \inf_{P' \in
  \mathcal{C}}\frac{1}{N}\sum_{n = 1}^N \expects_{(s, z, a) \sim
  H^t_n} [1 - P'(\sigma_{s}^{a} \mid z, a)]$.
Then the average regret for the model at depth $t$ is
$\bar{\epsilon}^t_{rgt} = \bar{\epsilon}_{prd}^t -
\bar{\epsilon}_{mdl}^t$.
For a no-regret online learning algorithm,
$\bar{\epsilon}_{rgt}^t \rightarrow 0$ as $N \rightarrow \infty$. This
gives the following bound on H-DAgger-MC's performance in terms of model regret.
\begin{theorem}\label{thm:hdaggermc}
  In H-DAgger-MC, the policies $\hat{\pi}_{1:N}$ are such that for any
  policy $\pi$,
\begin{align*}
  \expect_{s
  \sim \mu}\big[V^{\pi}(s) - V^{\bar{\pi}}(s)\big] \le \frac{8
  M}{1-\gamma} c_{\nu}^{\pi} \sum_{t = 1}^{T-1}
  (\bar{\epsilon}_{mdl}^t + \bar{\epsilon}_{rgt}^t) + \bar{\epsilon}_{mc},
\end{align*}
and if the model learning algorithm is no-regret then
$\bar{\epsilon}_{rgt}^t \rightarrow 0$ as $N \rightarrow \infty$ for
each $1 \le t \le T-1$.
\end{theorem}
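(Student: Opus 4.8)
The plan is to obtain Theorem~\ref{thm:hdaggermc} almost immediately from Lemma~\ref{lem:prederror}, since the preceding definitions have already been arranged so that the two remaining steps are (i) an algebraic substitution and (ii) the recognition of a standard online-learning quantity. For the displayed inequality, I would simply use the fact that the per-depth regret is \emph{defined} as $\bar{\epsilon}_{rgt}^t = \bar{\epsilon}_{prd}^t - \bar{\epsilon}_{mdl}^t$, so that $\bar{\epsilon}_{prd}^t = \bar{\epsilon}_{mdl}^t + \bar{\epsilon}_{rgt}^t$ holds identically for each $1 \le t \le T-1$. Substituting this into the bound of Lemma~\ref{lem:prederror} and summing over $t$ reproduces the stated inequality verbatim; the constant $\tfrac{8M}{1-\gamma}c_\nu^\pi$, the depth sum, and the planner term $\bar{\epsilon}_{mc}$ are all inherited unchanged from that lemma, so no new estimation is required here.

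The content of the second claim is the reduction to online learning, which I would carry out in the manner of \citenoun{ross2012agnostic}. Fixing a depth $t$, I would define for each iteration $n$ the loss functional $\ell^t_n(P) = \expects_{(s,z,a) \sim H^t_n}\big[1 - P(\sigma_s^a \mid z, a)\big]$ on models $P \in \mathcal{C}$. Under this definition the three quantities in the statement read off directly: $\bar{\epsilon}_{prd}^t = \tfrac{1}{N}\sum_{n=1}^N \ell^t_n(\hat{P}^t_n)$ is the average loss suffered by the sequence of learned models; $\bar{\epsilon}_{mdl}^t = \inf_{P \in \mathcal{C}} \tfrac{1}{N}\sum_{n=1}^N \ell^t_n(P)$ is the average loss of the single best model in hindsight, drawn from the same class $\mathcal{C}$ the learner searches; and therefore $\bar{\epsilon}_{rgt}^t$ is precisely the average external regret of the model-learner against the loss sequence $\ell^t_1, \ldots, \ell^t_N$. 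The definition of a no-regret learner is exactly that this average regret vanishes as $N \to \infty$ for any such sequence, which gives $\bar{\epsilon}_{rgt}^t \to 0$; since the argument is identical for every $1 \le t \le T-1$, the theorem follows. Note that because $\bar{\epsilon}_{mdl}^t$ competes only against $\mathcal{C}$, the result is genuinely agnostic --- it never supposes that $\mathcal{C}$ contains a perfect model.

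The step I expect to demand the most care is verifying that this really is a legitimate instance of the online-learning game, so that the no-regret guarantee is applicable. The subtlety is that the loss sequence is not supplied by an external adversary but is generated by the algorithm itself: the initial-pair mixture $\xi_n$ is induced by $\hat{\pi}_{n-1}$ and the hallucinated rollout in lines~\ref{line:startroll}--\ref{line:endroll} samples from the previous-iteration models $\hat{P}^{1:t-1}_{n-1}$, so $H^t_n$, and hence $\ell^t_n$, is fully determined by the iterates produced through iteration $n-1$. I would make explicit that this is precisely the adaptive-adversary situation that no-regret analyses are designed to handle: each loss functional becomes fixed once the earlier iterates are fixed, and $\hat{P}^t_n$ is the learner's response formed from the aggregated data, so the pair of sequences $(\hat{P}^t_{1:N}, \ell^t_{1:N})$ is a valid run of the online learner. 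Getting the indexing right here --- which model induces $H^t_n$, which model is evaluated under it, and in what order the information is revealed --- is the one place where the bookkeeping is delicate, but it follows the template of \citenoun{ross2012agnostic} and introduces no genuinely new difficulty beyond what Lemma~\ref{lem:prederror} has already absorbed.
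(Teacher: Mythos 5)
Your proposal is correct and matches the paper's own (essentially one-line) argument: the inequality is Lemma \ref{lem:prederror} with the identity $\bar{\epsilon}^t_{prd} = \bar{\epsilon}^t_{mdl} + \bar{\epsilon}^t_{rgt}$ substituted in, and the convergence claim is just the definition of a no-regret learner applied per depth to the loss sequence induced by the aggregated datasets, exactly as in the \citenoun{ross2012agnostic} reduction. Your additional remarks on the adaptive, algorithm-generated loss sequence are a sensible elaboration of the same point rather than a different route.
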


Theorem \ref{thm:hdaggermc} says that if $\mathcal{C}$ contains a
low-error model for each rollout depth then low error models will be
learned. Then, as discussed above, if $\bar{\epsilon}_{mc}$ is small
and $\nu$ visits important states, the resulting policy will yield
good performance.  Notably, even with hallucinated training, if
$\mathcal{C}$ contains a perfect model, H-DAgger-MC will learn a
perfect model.

It is important to note that this result does {\em not} promise that
H-DAgger-MC will eventually achieve the performance of the best
performing set of models in the class. The model at each rollout depth
is trained to minimize prediction error given the input distribution
provided by the shallower models. Note, however, that changing the
parameters of a model at one depth alters the training distribution
for deeper models. It is possible that better overall error could be
achieved by {\em increasing} the prediction error at one depth in
exchange for a favorable state distribution for deeper models. This
effect is not taken into account by H-DAgger-MC.

\section{Empirical Illustration}

In this section we illustrate the practical impact of optimizing
hallucinated error by comparing DAgger, DAgger-MC, and H-DAgger-MC in
the Shooter example described in Section
\ref{sec:onestep}\footnote{Source code for these experiments may be
  found at
  \url{github.com/etalvitie/hdaggermc}.}. The experimental setup
matches that of \citenoun{talvitie2015agnostic} for comparison's sake,
though the qualitative comparison presented here is robust to the
parameter settings.

In all cases one-ply MC was used with 50 uniformly random rollouts of
depth 15 at every step. The exploration distribution was generated by
following the optimal policy with $(1 - \gamma)$ probability of
termination at each step. The model for each pixel was learned using
Context Tree Switching \citep{veness2012context}, similar to the
FAC-CTW algorithm \citep{veness11montecarlo}, and used a $7\times 7$
neighborhood around the pixel in the previous timestep as input. Data
was shared across all positions. The discount factor was
$\gamma = 0.9$. In each iteration 500 training rollouts were generated
and the resulting policy was evaluated in an episode of length 30. The
discounted return obtained by the policy in each iteration is
reported, averaged over 50 trials.

\begin{figure*}
\centering
\includegraphics[width=0.96\linewidth]{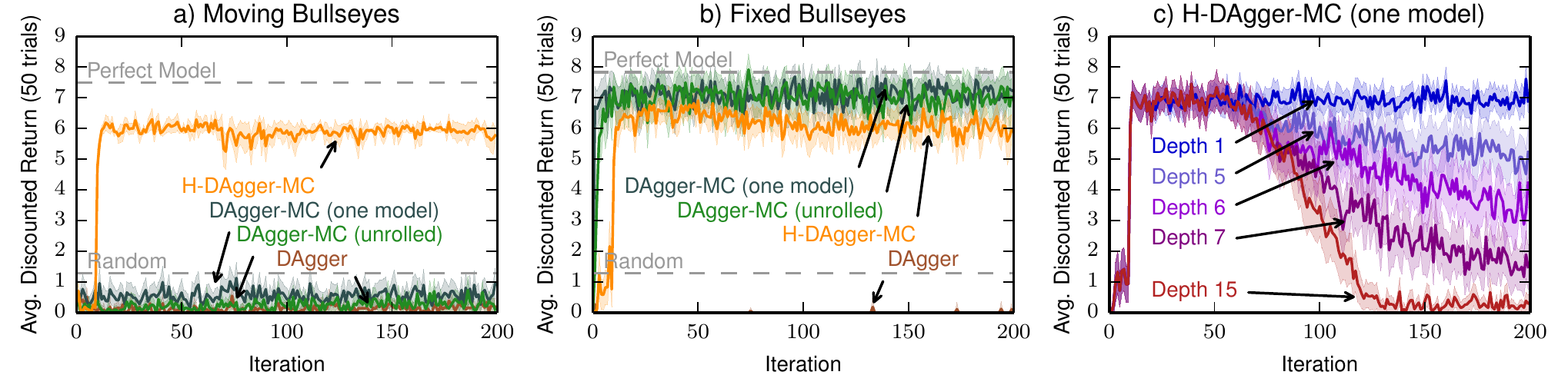}
%\vspace{-0.15in}
\caption{a,b) Comparing DAgger, DAgger-MC, and H-DAgger-MC in Shooter with moving and fixed
  bullseyes, respectively. c) H-DAgger-MC in Shooter with fixed
  bullseyes using a single model across time steps, truncating rollouts at various depths.}
%\vspace{-0.1in}
\label{fig:shooterResults}
\end{figure*}

The results can be seen in Figure \ref{fig:shooterResults}a and
\ref{fig:shooterResults}b. The shaded regions represent 95\%
confidence intervals for the mean performance. The benchmark lines
labeled ``Random'' and ``Perfect Model'' represent the average
performance of the uniform random policy and one-ply Monte Carlo using
a perfect model, respectively. In Figure \ref{fig:shooterResults}a the
bullseyes move, simulating the typical practical reality that
$\mathcal{C}$ does not contain a perfect model. In Figure
\ref{fig:shooterResults}b the bullseyes have fixed positions, so
$\mathcal{C}$ {\em does} contain a perfect model.

As observed by \citenoun{talvitie2015agnostic}, DAgger performs poorly
in both versions, due to the suboptimal planner. DAgger-MC is able to
perform well with fixed bullseyes (Figure \ref{fig:shooterResults}b),
but with moving bullseyes the model suffers from compounding
errors and is not useful for planning (Figure
\ref{fig:shooterResults}a). This holds for a single model and for an
``unrolled'' model.

In these experiments one practically-minded alteration was made to the
H-DAgger-MC algorithm. In early training the model is highly
inaccurate, and thus deep rollouts produce incoherent
samples. Training with these samples is counter-productive (also, the
large number of distinct, nonsensical contexts renders CTS
impractical). For these experiments, training rollouts in iteration
$i$ were truncated at depth $\lfloor \nicefrac{i}{10} \rfloor$.
Planning rollouts in these early iterations use the models that have
been trained so far and then repeatedly apply the deepest model in
order to complete the rollout. \citenoun{talvitie2014model},
\citenoun{venkatraman2015improving}, and \citenoun{oh2015action} all
similarly discarded noisy examples early in training. This transient
modification does not impact H-DAgger-MC's asymptotic guarantees.

In Figure \ref{fig:shooterResults}a it is clear that H-DAgger-MC
obtains a good policy despite the limitations of the model
class. Hallucinated training has made MBRL possible with both a flawed model
and a flawed planner while the standard approach has failed
entirely. In the case that $\mathcal{C}$ contains a perfect model
(rare in problems of genuine interest) H-DAgger-MC is outperformed by
DAgger-MC. Despite the adjustment to training, deep models still
receive noisy inputs. Theoretically the model should become perfectly
accurate in the limit, though in practice it may do so very slowly.

\subsection{Impact of the unrolled model}\label{sec:unrolled}

Recall that the H-DAgger-MC algorithm assumes the model will be
``unrolled,'' with a separate model responsible for sampling each step
in a rollout. This has clear practical disadvantages, but it is
important theoretically. When one model is used across all time-steps,
convergence to a perfect model cannot be guaranteed, even if one
exists in $\mathcal{C}$.

In Figure \ref{fig:shooterResults}c, H-DAgger-MC has been trained
using a single model in Shooter with fixed bullseyes. The temporary
truncation schedule described above is employed, but the training
rollouts have been permanently limited to various depths. First
consider the learning curve marked ``Depth 15'', where training rollouts are
permitted to reach the maximum depth. While the rollouts are
temporarily truncated the model does well, but performance degrades as
longer rollouts are permitted {\em even though $\mathcal{C}$ contains
  a perfect model}!

Recall from Section \ref{sec:hdaggermc} that changing the model
parameters impacts both prediction error and the future training
distribution. Furthermore, training examples generated by deep
rollouts may contain highly flawed samples as inputs. Sometimes
attempting to ``correct'' a large error (i.e. reduce prediction error)
causes additional, even worse errors in the next iteration (i.e. harms
the training distribution).  For instance consider a hallucinated
training example with the 4th screen from Figure \ref{fig:shooter}b as
input and the 5th screen from Figure \ref{fig:shooter}a as the
target. The model would effectively learn that targets can appear out
of nowhere, an error that would be even harder to correct in future
iterations. With a single model across timesteps, a feedback loop can
emerge: the model parameters change to attempt to correct large
errors, thereby causing larger errors, and so on. This feedback loop
causes the observed performance crash. With an unrolled model the
parameters of each sub-model cannot impact that
sub-model's own training distribution, ensuring stability.

Note that none of \citenoun{talvitie2014model},
\citenoun{venkatraman2015improving}, or \citenoun{oh2015action} used
an unrolled model. As such, all of their approaches are subject to
this concern. Notably, all three limited the depth of training
rollouts, presumably to prevent overly noisy samples. Figure
\ref{fig:shooterResults}c shows that in this experiment, the shorter
the training rollouts, the better the performance. These results show
that it may be possible in practice to avoid unrolling the model by
truncating training rollouts, though for now there is no performance
guarantee or principled choice of rollout depth.

\section{Conclusions and future work}

The primary contribution of this work is a deeper theoretical
understanding of how to perform effective MBRL in the face of model
class limitations. Specifically we have examined a novel measure of
model quality that, under some assumptions, is more tightly related to
MBRL performance than standard one-step prediction error. Using this
insight, we have also analyzed a MBRL algorithm that achieves good
control performance despite flaws in the model and planner and
provides strong theoretical performance guarantees.

We have also seen negative results indicating that hallucinated
one-step error may not be an effective optimization criterion in the
most general setting. This poses the open challenge of relaxing the
assumptions of deterministic dynamics and blind policies, or of
developing alternative approaches for improving multi-step error in
more general settings. We have further observed that hallucinated
training can cause stability issues, since model parameters affect
both prediction error and the training distribution itself. It would
be valuable to develop techniques that account for both of these
effects when adapting model parameters.

Specializing to the one-ply MC planning algorithm may seem
restrictive, but then again, the choice of planning algorithm cannot
make up for a poor model. When the model class is limited, H-DAgger-MC
is likely still a good choice over DAgger, even with a more
sophisticated planner. Still, it would be valuable to investigate
whether these principles can be applied to more sophisticated planning
algorithms.

Though this work has assumed that the reward function is known, the
results presented here can be straightforwardly extended to account
for reward error. However, this also raises the interesting point that
sampling an ``incorrect'' state has little negative impact if the
sampled state's rewards and transitions are similar to the ``correct''
state. It may be possible to exploit this to obtain
still tighter bounds, and more effective guidance for model learning
in MBRL architectures.

\section*{Acknowledgements}

This work was supported in part by NSF grant IIS-1552533. Many thanks
to Marc Bellemare whose feedback has positively influenced the work,
both in substance and presentation. Thanks to Drew Bagnell and Arun
Venkatraman for their valuable insights. Thanks also to Joel Veness
for his freely available FAC-CTW and CTS implementations
(\url{http://jveness.info/software/}).

{\small \bibliography{hdaggermc_AAAI2017}}

\begin{thebibliography}{16}
\providecommand{\natexlab}[1]{#1}
\providecommand{\url}[1]{\texttt{#1}}
\expandafter\ifx\csname urlstyle\endcsname\relax
  \providecommand{\doi}[1]{doi: #1}\else
  \providecommand{\doi}{doi: \begingroup \urlstyle{rm}\Url}\fi

\bibitem[Abbeel et~al.(2006)Abbeel, Quigley, and
  Ng]{Abbeel:2006:UIM:1143844.1143845}
P.~Abbeel, M.~Quigley, and A.~Y. Ng.
\newblock Using inaccurate models in reinforcement learning.
\newblock In \emph{Proceedings of the 23rd International Conference on Machine
  Learning}, 1--8, 2006.

\bibitem[Bellemare et~al.(2014)Bellemare, Veness, and
  Talvitie]{bellemare2014skip}
M.~G. Bellemare, J.~Veness, and E.~Talvitie.
\newblock Skip context tree switching.
\newblock In \emph{Proceedings of the 31st International Conference on Machine
  Learning}, 1458--1466, 2014.

\bibitem[Bowling et~al.(2006)Bowling, McCracken, James, Neufeld, and
  Wilkinson]{06icml-psr-exploration}
M.~Bowling, P.~McCracken, M.~James, J.~Neufeld, and D.~Wilkinson.
\newblock Learning predictive state representations using non-blind policies.
\newblock In \emph{Proceedings of the 23rd International Conference on Machine
  Learning}, 129--136, 2006.

\bibitem[Hausknecht et~al.(2014)Hausknecht, Lehman, Miikkulainen, and
  Stone]{hausknecht2014neuroevolution}
M.~Hausknecht, J.~Lehman, R.~Miikkulainen, and P.~Stone.
\newblock A neuroevolution approach to general atari game playing.
\newblock \emph{IEEE Transactions on Computational Intelligence and AI in
  Games}, 6\penalty0 (4):\penalty0 355--366, 2014.

\bibitem[Joseph et~al.(2013)Joseph, Geramifard, Roberts, How, and
  Roy]{joseph2013reinforcement}
J.~Joseph, A.~Geramifard, J.~W. Roberts, J.~P. How, and N.~Roy.
\newblock Reinforcement learning with misspecified model classes.
\newblock In \emph{2013 IEEE International Conference on Robotics and
  Automation}, 939--946, 2013.

\bibitem[Kakade(2003)]{kakade2003sample}
S.~M. Kakade.
\newblock \emph{On the sample complexity of reinforcement learning}.
\newblock PhD thesis, University of London, 2003.

\bibitem[Oh et~al.(2015)Oh, Guo, Lee, Lewis, and Singh]{oh2015action}
J.~Oh, X.~Guo, H.~Lee, R.~L. Lewis, and S.~Singh.
\newblock Action-conditional video prediction using deep networks in atari
  games.
\newblock In \emph{Advances in Neural Information Processing Systems 28}, 2845--2853, 2015.

\bibitem[Ross and Bagnell(2012)]{ross2012agnostic}
S.~Ross and D.~Bagnell.
\newblock Agnostic system identification for model-based reinforcement
  learning.
\newblock In \emph{Proceedings of the 29th International Conference on Machine
  Learning}, 1703--1710, 2012.

\bibitem[Sorg et~al.(2010)Sorg, Lewis, and Singh]{sorg2010reward}
J.~Sorg, R.~L. Lewis, and S.~Singh.
\newblock Reward design via online gradient ascent.
\newblock In \emph{Advances in Neural Information Processing Systems 23}, 2190--2198, 2010.

\bibitem[Szita and Szepesv{\'a}ri(2010)]{szita2010model}
I.~Szita and C.~Szepesv{\'a}ri.
\newblock Model-based reinforcement learning with nearly tight exploration
  complexity bounds.
\newblock In \emph{Proceedings of the 27th International Conference on Machine
  Learning}, 1031--1038, 2010.

\bibitem[Talvitie(2014)]{talvitie2014model}
E.~Talvitie.
\newblock Model regularization for stable sample rollouts.
\newblock In \emph{Proceedings of the 30th Conference on Uncertainty in
  Artificial Intelligence}, 780--789, 2014.

\bibitem[Talvitie(2015)]{talvitie2015agnostic}
E.~Talvitie.
\newblock Agnostic system identification for monte carlo planning.
\newblock In \emph{Proceedings of the 29th AAAI Conference on Artificial
  Intelligence}, 2986--2992, 2015.

\bibitem[Tesauro and Galperin(1996)]{tesauro1996line}
G.~Tesauro and G.~R. Galperin.
\newblock On-line policy improvement using monte-carlo search.
\newblock In \emph{Advances in Neural Information Processing Systems 9},
  1068--1074, 1996.

\bibitem[Veness et~al.(2011)Veness, Ng, Hutter, Uther, and
  Silver]{veness11montecarlo}
J.~Veness, K.~S. Ng, M.~Hutter, W.~T.~B. Uther, and D.~Silver.
\newblock {A Monte-Carlo AIXI Approximation}.
\newblock \emph{Journal of Artificial Intelligence Research},
  40:\penalty0 95--142, 2011.

\bibitem[Veness et~al.(2012)Veness, Ng, Hutter, and Bowling]{veness2012context}
J.~Veness, K.~S. Ng, M.~Hutter, and M.~Bowling.
\newblock Context tree switching.
\newblock In \emph{Proceedings of the 2012 Data Compression Conference},
  327--336. IEEE, 2012.

\bibitem[Venkatraman et~al.(2015)Venkatraman, Hebert, and
  Bagnell]{venkatraman2015improving}
A.~Venkatraman, M.~Hebert, and J.~A. Bagnell.
\newblock Improving multi-step prediction of learned time series models.
\newblock In \emph{Proceedings of the 29th AAAI Conference on Artificial
  Intelligence}, 3024--3030, 2015.

\end{thebibliography}
\bibliographystyle{abbrvnat}
%{\small \input{hdaggerMC_arXiv2017.bbl}}
\appendix

\section*{Appendix: Proofs}
\allowdisplaybreaks

\setcounter{lem}{1}

\begin{lem}% \label{lem:onestepbound} 
For any policy $\pi$ and
  state-action distribution $\xi$,
\begin{align*}
\epsilon_{val}^{\xi, \pi, T} \le \frac{M}{1 - \gamma}\sum_{t = 1}^{T-1} (\gamma^{t} - 
\gamma^{T})\expect_{(s, a) \sim
  D^t_{\xi, \pi}}\big[\|P_s^a - \hat{P}_s^a\|_1\big].
\end{align*}
\end{lem}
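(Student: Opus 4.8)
The plan is to combine the multi-step bound of Lemma~\ref{lem:multistep}, which is available since it is proved directly from the definition of $Q^\pi_T$ and does not depend on the present lemma (so there is no circularity), with a standard ``simulation lemma'' that re-expresses the $t$-step distribution mismatch in terms of one-step model errors. Lemma~\ref{lem:multistep} already gives $\epsilon_{val}^{\xi,\pi,T}\le M\sum_{t=1}^{T}\gamma^{t-1}\expects_{(s,a)\sim\xi}[\|D^t_{s,a,\pi}-\hat{D}^t_{s,a,\pi}\|_1]$, so it remains only to bound each $\|D^t_{s,a,\pi}-\hat{D}^t_{s,a,\pi}\|_1$ by a sum of one-step prediction errors.

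First I would establish, for a fixed start pair $(s,a)$, the recursion $\|D^t_{s,a,\pi}-\hat{D}^t_{s,a,\pi}\|_1 \le \expects_{(s',a')\sim D^{t-1}_{s,a,\pi}}[\|P^{a'}_{s'}-\hat{P}^{a'}_{s'}\|_1] + \|D^{t-1}_{s,a,\pi}-\hat{D}^{t-1}_{s,a,\pi}\|_1$. To get this I would write $D^t$ and $\hat{D}^t$ as one further application of the transition-then-policy kernel to $D^{t-1}$ and $\hat{D}^{t-1}$ respectively, and add and subtract the hybrid term that pairs the \emph{true} previous distribution $D^{t-1}$ with the \emph{model} kernel $\hat{P}$. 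The first resulting difference isolates $(P-\hat{P})$ weighted by $D^{t-1}$, giving the one-step error term; the second is $(D^{t-1}-\hat{D}^{t-1})$ pushed through the stochastic kernels $\hat{P}$ and $\pi$, which is non-expansive in $L_1$ because their rows sum to one, giving the recursive term. Since $D^1_{s,a,\pi}=\hat{D}^1_{s,a,\pi}$ is the point mass at $(s,a)$, unrolling the recursion yields $\|D^t_{s,a,\pi}-\hat{D}^t_{s,a,\pi}\|_1 \le \sum_{k=1}^{t-1}\expects_{(s',a')\sim D^k_{s,a,\pi}}[\|P^{a'}_{s'}-\hat{P}^{a'}_{s'}\|_1]$.

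Next I would take the expectation over $(s,a)\sim\xi$ and use $D^k_{\xi,\pi}=\expects_{(s,a)\sim\xi}D^k_{s,a,\pi}$ to collapse the nested expectation into one over $D^k_{\xi,\pi}$. Substituting into Lemma~\ref{lem:multistep} gives $\epsilon_{val}^{\xi,\pi,T}\le M\sum_{t=1}^{T}\gamma^{t-1}\sum_{k=1}^{t-1}e_k$, where $e_k=\expects_{(s,a)\sim D^k_{\xi,\pi}}[\|P^a_s-\hat{P}^a_s\|_1]$. The final step is to swap the order of summation and evaluate the geometric series $\sum_{t=k+1}^{T}\gamma^{t-1}=(\gamma^k-\gamma^T)/(1-\gamma)$, which produces exactly the coefficient $\frac{M}{1-\gamma}(\gamma^k-\gamma^T)$ and completes the proof after renaming $k$ to $t$.

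The main obstacle is the recursion step: I must add and subtract the hybrid term in the direction that leaves the \emph{true} distribution $D^{t-1}$ (rather than $\hat{D}^{t-1}$) attached to the $(P-\hat{P})$ factor, since only then do the one-step errors appear under $D^k_{\xi,\pi}$ as the target bound demands; the accompanying care is to verify that applying the stochastic kernels $\hat{P}$ and $\pi$ cannot increase the $L_1$ mass of the signed measure $D^{t-1}-\hat{D}^{t-1}$. The summation swap and the geometric-series bookkeeping are routine once the recursion is in hand.
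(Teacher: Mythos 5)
Your proof is correct, but it takes a genuinely different route from the paper's. The paper works directly at the level of value functions: it sets up a recurrence on $|\hat{Q}^\pi_T(s_1,a_1) - Q^\pi_T(s_1,a_1)|$ by adding and subtracting the hybrid term $\expects_{s \sim P_{s_1}^{a_1}}[\hat{V}^\pi_{T-1}(s)]$, unrolls that recurrence, and then bounds the per-step discrepancy by integrating $\hat{Q}^\pi_{T-t}$ (whose sup-norm is at most $M(1-\gamma^{T-t})/(1-\gamma)$) against $|P_s^a(s') - \hat{P}_s^a(s')|$; the coefficient $(\gamma^t - \gamma^T)/(1-\gamma)$ thus comes from the bound on the truncated value function. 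You instead pass through Lemma~\ref{lem:multistep} and run an analogous add-and-subtract telescoping one level down, on the state-action distributions, obtaining $\|D^t_{s,a,\pi}-\hat{D}^t_{s,a,\pi}\|_1 \le \sum_{k=1}^{t-1}\expects_{(s',a')\sim D^k_{s,a,\pi}}[\|P^{a'}_{s'}-\hat{P}^{a'}_{s'}\|_1]$, and your coefficient arises from the geometric series $\sum_{t=k+1}^{T}\gamma^{t-1}$ after swapping the order of summation. Your decomposition is sound: attaching $P-\hat{P}$ to the true distribution $D^{t-1}$ is indeed the direction that produces errors weighted by $D^k_{\xi,\pi}$, the $L_1$ non-expansiveness of the stochastic kernels $\hat{P}$ and $\pi$ handles the recursive term, and the bookkeeping yields exactly the stated constant. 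What your route buys is a transparent proof that the one-step bound is a relaxation of the multi-step bound for arbitrary policies and stochastic dynamics, a relation the paper only makes explicit (via the hallucinated error) in Theorem~\ref{thm:tightness} for the deterministic, blind-policy case; what the paper's route buys is self-containment (no reliance on Lemma~\ref{lem:multistep}) and a value-function-level recurrence whose form is reused in the subsequent hallucinated-error arguments.
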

\begin{proof}
First note that
\begin{align*}
&\epsilon_{val}^{\xi, \pi, T} = \expect_{(s_1, a_1) \sim \xi}\big[|\hat{Q}_T^{\pi}(s_1, a_1) - Q_T^{\pi}(s_1,
  a_1)|\big]&\\ 
&= \gamma \expect_{(s_1, a_1) \sim \xi}\big[|\expect_{s \sim
  \hat{P}_{s_1}^{a_1}}[\hat{V}_{T-1}^{\pi}(s)] - \expect_{s \sim P_{s_1}^{a_1}}[V_{T-1}^{\pi}(s)]|\big]&\\
&= \gamma \expect_{(s_1, a_1) \sim \xi}\Big[\big|\expect_{s \sim
  \hat{P}_{s_1}^{a_1}}[\hat{V}_{T-1}^{\pi}(s)] - \expect_{s \sim P_{s_1}^{a_1}}[\hat{V}_{T-1}^{\pi}(s)]&\\
&\hspace{1in} + \expect_{s \sim P_{s_1}^{a_1}}[\hat{V}_{T-1}^{\pi}(s)] - \expect_{s \sim P_{s_1}^{a_1}}[V_{T-1}^{\pi}(s)]\big|\Big]&\\
&\le \gamma \expect_{(s_1, a_1) \sim \xi}
    \Big[\big|\expect_{\substack{s \sim \hat{P}_{s_1}^{a_1}\\ a \sim
  \pi_{s}}}[\hat{Q}_{T-1}^{\pi}(s, a)] - \expect_{\substack{s \sim P_{s_1}^{a_1}\\ a \sim \pi_{s}}}[\hat{Q}_{T-1}^{\pi}(s, a)]\big|\Big]&\\
&\hspace{.3in} + \gamma \expect_{(s, a) \sim D^2_{\xi,
  \pi}}\big[|\hat{Q}_{T-1}^{\pi}(s, a) - Q_{T-1}^{\pi}(s, a)])|\big].&
\end{align*}

Rolling out the recurrence gives
\begin{align*}
&\epsilon_{val}^{\xi, \pi, T} = \expect_{(s_1, a_1) \sim \xi}\big[|\hat{Q}_T^{\pi}(s_1, a_1) - Q_T^{\pi}(s_1,
  a_1)|\big]&\\
& \le \sum_{t = 1}^{T-1} \gamma^{t} \expect_{(s, a) \sim
  D^t_{\xi, \pi}}\Big[\big|\expect_{\substack{s' \sim \hat{P}_s^a\\a' \sim
  \pi_{s'}}}[\hat{Q}_{T-t}^{\pi}(s', a')]&\\
&\hspace{1.8in} - \expect_{\substack{s' \sim P_s^a\\a' \sim \pi_{s'}}}[\hat{Q}_{T-t}^{\pi}(s', a')]\big|\Big]&\\
&\le \sum_{t = 1}^{T-1} \gamma^{t} \sum_{s', a'} \pi_{s'}(a')
  \hat{Q}^{\pi}_{T-t}(s', a')&\\
&\hspace{1.3in} \expect_{(s, a) \sim
  D^t_{\xi, \pi}}\big[|\hat{P}_s^a(s') - P_s^a(s')|\big]&\\
&\le \sum_{t = 1}^{T-1} \gamma^{t} \frac{M(1 - 
\gamma^{T-t})}{1 - \gamma}\expect_{(s, a) \sim
  D^t_{\xi, \pi}}\big[\|\hat{P}_s^a - P_s^a\|_1\big]&\\
&= \frac{M}{1 - \gamma}\sum_{t = 1}^{T-1} (\gamma^{t} - 
\gamma^{T})\expect_{(s, a) \sim
  D^t_{\xi, \pi}}\big[\|\hat{P}_s^a - P_s^a\|_1\big].&\qedhere
\end{align*}
\end{proof}

\begin{lem}%\label{lem:multistep}
  For any policy $\pi$ and state-action distribution $\xi$,
\begin{align*}
\epsilon_{val}^{\xi, \pi, T} \le M\sum_{t = 1}^{T} \gamma^{t-1}
                             \expect_{(s, a) \sim \xi}\big[\|D^t_{s, a,
                             \pi} - \hat{D}^t_{s, a, \pi}\|_1\big].
\end{align*}
\end{lem}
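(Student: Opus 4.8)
The plan is to exploit the explicit summation form of the $T$-step value function recalled just before the statement, which writes $Q^\pi_T(s,a)$ as a discounted sum over depths $t$ of expected rewards under the distributions $D^t_{s,a,\pi}$, and to observe that the model's value $\hat Q^\pi_T(s,a)$ is the identical expression with each $D^t_{s,a,\pi}$ replaced by the model distribution $\hat D^t_{s,a,\pi}$ (the reward function $R$ is shared, since it is assumed known). Subtracting the two expressions and applying the triangle inequality across the finite sum over $t$ would reduce the task to controlling, at each depth $t$, the single-term discrepancy $\big|\expects_{(s',a')\sim \hat D^t_{s,a,\pi}} R(s',a') - \expects_{(s',a')\sim D^t_{s,a,\pi}} R(s',a')\big|$.

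First I would rewrite each such discrepancy as $\big|\sum_{s',a'} R(s',a')\,(\hat D^t_{s,a,\pi}(s',a') - D^t_{s,a,\pi}(s',a'))\big|$. The key step is to bound this by $M\,\|\hat D^t_{s,a,\pi} - D^t_{s,a,\pi}\|_1$: since the reward lies in $[0,M]$, pulling the absolute value inside the sum and using $R(s',a') \le M$ yields exactly this estimate, a standard H\"older-type bound on the difference of two expectations of a bounded function against their underlying distributions. Carrying the factor $\gamma^{t-1}$ through, this gives the pointwise inequality $|\hat Q^\pi_T(s,a) - Q^\pi_T(s,a)| \le M\sum_{t=1}^{T} \gamma^{t-1}\,\|\hat D^t_{s,a,\pi} - D^t_{s,a,\pi}\|_1$.

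Finally I would take the expectation over $(s,a)\sim\xi$ of both sides. Since $\epsilon_{val}^{\xi,\pi,T}$ is defined as $\expects_{(s,a)\sim\xi}\big[|Q^\pi_T(s,a) - \hat Q^\pi_T(s,a)|\big]$, and expectation is monotone and linear, moving it through the finite sum over $t$ delivers the claimed bound directly. There is essentially no obstacle here; as the surrounding text notes, the result is straightforward. The only point requiring any care is the bounded-reward estimate of the difference of expectations. A slightly sharper constant $M/2$ is in fact available, by subtracting the constant $M/2$ from $R$ and using that both $D^t_{s,a,\pi}$ and $\hat D^t_{s,a,\pi}$ sum to one, but the looser constant $M$ stated in the lemma follows immediately from the triangle inequality and is all that is needed.
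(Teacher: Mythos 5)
Your proposal is correct and follows essentially the same route as the paper's proof: expand both $Q^\pi_T$ and $\hat{Q}^\pi_T$ as discounted sums of expected rewards over $D^t_{s,a,\pi}$ and $\hat{D}^t_{s,a,\pi}$, apply the triangle inequality across $t$ and over state-action pairs, and bound the reward by $M$ to obtain the $L_1$ distance. Your side remark about the sharper constant $M/2$ via centering the reward is a valid refinement the paper does not pursue, but otherwise the two arguments are identical.
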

\begin{proof}
This follows straightforwardly from the definition.
\begin{align*}
&\epsilon_{val}^{\xi, \pi, T} =\expect_{(s, a) \sim \xi} \big[|Q_T^\pi(s, a) - \hat{Q}_T^\pi(s,a)|\big]&\\
  &= \expect_{(s, a) \sim \xi} \Bigg[\bigg| \sum_{t = 1}^T \gamma^{t-1}\sum_{(s',
    a')}(D^t_{s, a, \pi}(s', a')&\\
  &\hspace{1.7in}- \hat{D}^t_{s, a, \pi}(s', a') ) R(s',
    a') \bigg| \Bigg]&\\
  &\le \sum_{t = 1}^T \gamma^{t-1}\expect_{(s, a) \sim \xi} \bigg[\sum_{(s',
    a')} \big| (D^t_{s, a, \pi}(s', a')&\\
&\hspace{1.7in} - \hat{D}^t_{s, a, \pi}(s', a') ) R(s',
    a') \big| \bigg]&\\
  &\le M\sum_{t = 1}^T \gamma^{t-1}\expect_{(s, a) \sim \xi} \bigg[\sum_{(s',
    a')} \big| D^t_{s, a, \pi}(s', a')&\\
&\hspace{2.2in} - \hat{D}^t_{s, a, \pi}(s', a')
  \big| \bigg]&\\
&= M\sum_{t = 1}^{T} \gamma^{t-1}
                             \expect_{(s, a) \sim \xi}\big[\|D^t_{s, a,
                             \pi} - \hat{D}^t_{s, a, \pi}\|_1\big].&\qedhere
\end{align*}
\end{proof}

\begin{lem}%\label{lem:hallucinate}
  For any policy $\pi$ and state-action distribution $\xi$,
\begin{align*}
\epsilon_{val}^{\xi, \pi, T}\le
  M \sum_{t = 1}^{T-1} \gamma^t \expect_{(s, a, z, b) \sim J^{t}_{\xi, \pi}}
  \big[\|P_s^a - \hat{P}_z^b\|_1\big].
\end{align*}
\end{lem}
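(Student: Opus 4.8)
The plan is to expand $\epsilon_{val}^{\xi,\pi,T}$ directly from the definition of the $T$-step value as a discounted sum of expected rewards, and then to re-express each per-step reward gap as a single expectation under the independent coupling $J^t$. First I would fix a starting pair $(s_1,a_1)$ and write
\begin{align*}
&\hat{Q}_T^\pi(s_1,a_1) - Q_T^\pi(s_1,a_1) \\
&\quad= \sum_{t=1}^{T}\gamma^{t-1}\Big(\expects_{(s',a')\sim\hat{D}^t_{s_1,a_1,\pi}}[R(s',a')] - \expects_{(s',a')\sim D^t_{s_1,a_1,\pi}}[R(s',a')]\Big).
\end{align*}
Since $\hat{D}^1_{s_1,a_1,\pi} = D^1_{s_1,a_1,\pi}$ is the point mass at $(s_1,a_1)$, the $t=1$ term vanishes and the sum effectively begins at $t=2$. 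Applying the triangle inequality over the sum then reduces the task to bounding each individual step-$t$ reward gap.

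The crux is to turn the step-$t$ gap into a \emph{coupled} one-step prediction error. Writing $r^\pi(s') := \expects_{a'\sim\pi_{s'}}[R(s',a')]$, which lies in $[0,M]$, the expected reward at step $t$ is the push-forward of the step-$(t-1)$ state-action distribution through the appropriate kernel, so $\expects_{D^t}[R] = \expects_{(s,a)\sim D^{t-1}_{s_1,a_1,\pi}}\expects_{s'\sim P_s^a}[r^\pi(s')]$ and likewise $\expects_{\hat{D}^t}[R] = \expects_{(z,b)\sim\hat{D}^{t-1}_{s_1,a_1,\pi}}\expects_{z'\sim\hat{P}_z^b}[r^\pi(z')]$. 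Because $D^{t-1}_{s_1,a_1,\pi}$ and $\hat{D}^{t-1}_{s_1,a_1,\pi}$ each sum to one, I would multiply each marginal expectation by the other's total mass, thereby expressing the difference as a single expectation under the product $D^{t-1}_{s_1,a_1,\pi}(s,a)\,\hat{D}^{t-1}_{s_1,a_1,\pi}(z,b)$.

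With the environment and model trajectories now coupled independently at step $t-1$, I would pull the absolute value inside the expectation and apply the $L_1$--$L_\infty$ (H\"older) bound $\big|\sum_{s'}\big(\hat{P}_z^b(s') - P_s^a(s')\big)\,r^\pi(s')\big| \le M\,\|P_s^a - \hat{P}_z^b\|_1$, using $0 \le r^\pi \le M$. Finally, taking the outer expectation over $(s_1,a_1)\sim\xi$ and recognizing that $\expects_{(s_1,a_1)\sim\xi}\big[D^{t-1}_{s_1,a_1,\pi}(s,a)\,\hat{D}^{t-1}_{s_1,a_1,\pi}(z,b)\big] = J^{t-1}_{\xi,\pi}(s,a,z,b)$, the reindexing $t\mapsto t-1$ (so that $\gamma^{t-1}\mapsto\gamma^t$ and the range becomes $1$ to $T-1$) yields exactly the claimed bound.

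I expect the product-coupling step to be the main obstacle. The right-hand side compares the environment's transition from its \emph{own} visited state $(s,a)$ against the model's transition from its \emph{own}, independently visited state $(z,b)$; this structure only emerges once one observes that a difference of two marginal reward expectations can be lifted to a single expectation under the independent product $D^{t-1}_{s_1,a_1,\pi}\times\hat{D}^{t-1}_{s_1,a_1,\pi}$, which is precisely the joint $J^{t-1}_{\xi,\pi}$ after averaging over $\xi$. By contrast, the triangle inequality and the H\"older bound are routine, and the remaining work is just reindexing.
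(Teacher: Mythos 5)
Your proposal is correct and follows essentially the same route as the paper's proof: the decisive step in both is multiplying each marginal reward (or distribution) expectation by the unit total mass of the other trajectory's step-$(t-1)$ distribution, so that the difference becomes a single expectation under the independent product $D^{t-1}_{s_1,a_1,\pi}\times\hat{D}^{t-1}_{s_1,a_1,\pi}$, which averages over $\xi$ to $J^{t-1}_{\xi,\pi}$. The only cosmetic difference is that the paper first passes through the multi-step $L_1$ bound of Lemma 3 and then bounds $\|D^t_{s_1,a_1,\pi}-\hat{D}^t_{s_1,a_1,\pi}\|_1$, whereas you apply H\"older directly to the reward gap; the underlying computation is identical.
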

\begin{proof}
First note that for any $s_1, a_1, s', a'$ and for $t > 1$,
\begin{align*}
&D^t_{s_1,a_1, \pi}(s', a')- \hat{D}^t_{s_1,a_1, \pi}(s', a')&\\
 &= \sum_{s, a}
  D^{t-1}_{s_1, a_1, \pi} (s, a) P_s^a(s') \pi_{s'}(a')&\\
&\hspace{1in} - \sum_{z,
  b} \hat{D}^{t-1}_{s_1, a_1, \pi} (z, b) \hat{P}_z^b(s') \pi_{s'}(a')&\\
&= \pi_{s'}(a') \bigg(\sum_{s, a}
  D^{t-1}_{s_1, a_1, \pi} (s, a) P_s^a(s') \sum_{z, b}
  \hat{D}^{t-1}_{s_1, a_1, \pi}(z, b)&\\
&\hspace{.6in} - \sum_{z,
  b} \hat{D}^{t-1}_{s_1, a_1, \pi} (z, b) \hat{P}_z^b(s') \sum_{s, a} D^{t-1}_{s_1, a_1, \pi}(s, a)\bigg)\\
&= \pi_{s'}(a')\bigg(\sum_{s, a, z, b}
  D^{t-1}_{s_1, a_1, \pi}(s, a)\hat{D}^{t-1}_{s_1, a_1, \pi}(z, b)&\\
&\hspace{2.1in}  (P_s^a(s') - \hat{P}_z^b(s'))\bigg).&
\end{align*}

Further note that for any $s, a, \pi$,
$D^1_{s, a, \pi} = \hat{D}^1_{s, a, \pi}$. Combining these facts with
Lemma \ref{lem:multistep} we see that
\begin{align*}
&\epsilon_{val}^{\xi, \pi, T} = \expect_{(s, a) \sim \xi} \big[|Q_T^\pi(s, a) -
  \hat{Q}_T^\pi(s,a)|\big]&\\
& \le M\sum_{t = 1}^{T}\gamma^{t-1}
                             \expect_{(s, a) \sim \xi}\big[\|D^t_{s, a,
                             \pi} - \hat{D}^t_{s, a, \pi}\|_1\big]&\\
& = M\sum_{t = 2}^{T}\gamma^{t-1}
                             \expect_{(s_1, a_1) \sim \xi}\bigg[\sum_{s',
  a'} \Big|D^t_{s_1, a_1,
                             \pi}(s', a')&\\
&\hspace{2.1in} - \hat{D}^t_{s_1, a_1, \pi}(s',
  a')\Big|\bigg]&\\
& = M\sum_{t = 1}^{T-1}\gamma^{t}
                             \expect_{(s_1, a_1) \sim \xi}\bigg[\sum_{s',
  a'} \Big|D^{t+1}_{s_1, a_1,
                             \pi}(s', a')&\\
&\hspace{2.1in} - \hat{D}^{t+1}_{s_1, a_1, \pi}(s',
  a')\Big|\bigg]&\\
& \le M\sum_{t = 1}^{T-1}\gamma^{t}
                             \expect_{(s_1, a_1) \sim \xi}\bigg[\sum_{s, a, z, b}
  D^{t}_{s_1, a_1, \pi}(s, a)\hat{D}^{t}_{s_1, a_1, \pi}(z, b)&\\
&\hspace{1.95in}  \sum_{s'}\Big|P_s^a(s') - \hat{P}_z^b(s')\Big|\bigg]&\\
& = M\sum_{t = 1}^{T-1}\gamma^{t}
                            \sum_{s, a, z, b} \|P_s^a -
                            \hat{P}_z^b\|_1&\\
&\hspace{1.1in} \expect_{(s_1, a_1) \sim \xi}\big[
  D^{t}_{s_1, a_1, \pi}(s, a)\hat{D}^{t}_{s_1, a_1, \pi}(z,
  b)\big]&\\
&=  M \sum_{t = 1}^{T-1} \gamma^t \expect_{(s, a, z, b) \sim J^{t}_{\xi, \pi}}
  \big[\|P_s^a - \hat{P}_z^b\|_1\big].&\qedhere
\end{align*}
\end{proof}

\setcounter{lem}{6}
\begin{lem}%\label{lem:detblind}
  If $P$ is deterministic, then for any blind policy $\pi$ and any
  state-action distribution $\xi$,
\begin{align*}
\epsilon_{val}^{\xi, \pi, T} \le 2M \sum_{t = 1}^{T-1}\gamma^t
  \expect_{(s, z, a) \sim H^{t}_{\xi, \pi}} \big[1 - \hat{P}_z^a(\sigma^a_s)\big].
\end{align*}
\end{lem}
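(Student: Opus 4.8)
The plan is to obtain the bound from the multi-step error bound (Lemma~\ref{lem:multistep}) by using determinism and blindness to collapse the depth-$t$ distribution error into a single hallucinated one-step error at depth $t-1$. Starting from Lemma~\ref{lem:multistep}, it suffices to bound, for each $t$, the quantity $\expect_{(s_1,a_1)\sim\xi}\big[\|D^t_{s_1,a_1,\pi} - \hat{D}^t_{s_1,a_1,\pi}\|_1\big]$ and then reassemble the $\gamma^{t-1}$-weighted sum.

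First I would exploit the blind policy assumption to write both $D^t$ and $\hat{D}^t$ as mixtures over a \emph{common} action-sequence distribution. Because $\pi$ chooses actions from the action history alone, the law of the action prefix $a_{1:t-1}$ is identical in the environment and model rollouts; the two rollouts differ only in the induced state. Fixing $(s_1,a_1)$ and a prefix, the deterministic environment puts all its mass on $\sigma_{s_1}^{a_{1:t-1}}$, whereas the model's step-$t$ state is distributed as $\hat{P}_{s_1}^{a_{1:t-1}}$. Carrying the shared final policy factor $\pi(a_t\mid a_{1:t-1})$ through and applying the triangle inequality (pulling the absolute value inside the sum over prefixes, and summing out the final action) reduces the state-action distance to the state-only distance $\|\delta_{\sigma_{s_1}^{a_{1:t-1}}} - \hat{P}_{s_1}^{a_{1:t-1}}\|_1$, i.e. the distance to a point mass, which is exactly $2\big(1 - \hat{P}_{s_1}^{a_{1:t-1}}(\sigma_{s_1}^{a_{1:t-1}})\big)$. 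Hence $\expect_\xi[\|D^t - \hat{D}^t\|_1] \le 2\,\expect_{(s_1,a_1)\sim\xi}\expect_{a_{2:t-1}\sim\pi}\big[1 - \hat{P}_{s_1}^{a_{1:t-1}}(\sigma_{s_1}^{a_{1:t-1}})\big]$.

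The crux is then an exact identity that holds precisely because the environment is deterministic: the model's $(t-1)$-step probability of hitting the \emph{unique} true state decomposes by conditioning on the model's state at step $t-1$. Writing $\hat{P}_{s_1}^{a_{1:t-1}}(\sigma_{s_1}^{a_{1:t-1}}) = \expect_{z\sim\hat{P}_{s_1}^{a_{1:t-2}}}\big[\hat{P}_z^{a_{t-1}}(\sigma_{s}^{a_{t-1}})\big]$, where $s = \sigma_{s_1}^{a_{1:t-2}}$ is the environment state at step $t-1$, shows that the compounded $(t-1)$-step error equals the expected hallucinated one-step error evaluated at the model's own step-$(t-1)$ state $z$ against the environment's true successor $\sigma_s^{a_{t-1}}$; all earlier compounding is already absorbed into the distribution of $z$. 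Averaging over $\xi$ and the blind prefix, the right-hand side is exactly $\expect_{(s,z,a)\sim H^{t-1}_{\xi,\pi}}[1 - \hat{P}_z^a(\sigma_s^a)]$, so $\expect_\xi[\|D^t - \hat{D}^t\|_1] \le 2\,\expect_{H^{t-1}}[1 - \hat{P}_z^a(\sigma_s^a)]$.

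Finally I would substitute this into Lemma~\ref{lem:multistep}; the $t=1$ term vanishes since $D^1 = \hat{D}^1$, and reindexing $k = t-1$ converts $M\sum_{t=2}^{T}\gamma^{t-1}\cdot 2\,\expect_{H^{t-1}}[\,\cdot\,]$ into $2M\sum_{k=1}^{T-1}\gamma^{k}\,\expect_{H^{k}}[\,\cdot\,]$, the claimed bound. The main obstacle is the second step: correctly invoking blindness to couple the two rollouts onto a shared action-prefix distribution (so that $D^t$ and $\hat{D}^t$ become comparable mixtures), together with recognizing that determinism makes the ``expected one-step hit probability from the model's current state'' an exact identity rather than a mere inequality. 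This is also exactly where Propositions~\ref{prop:err} and \ref{prop:deterr} predict the argument should fail: without determinism the environment successor is not a point mass, and without blindness the environment and model prefixes need not share a law, so the clean collapse to a single $H^{t-1}$ term breaks down.
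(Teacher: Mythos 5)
Your proof is correct and follows essentially the same route as the paper's: both start from Lemma \ref{lem:multistep}, use blindness to put the environment and model rollouts on a common action-prefix distribution, use determinism to turn the relevant $L_1$ distance into $2\big(1-\hat{P}(\cdot)\big)$, and decompose at step $t-1$ to recover the $\expect_{H^{t-1}_{\xi,\pi}}$ term before reindexing. The only cosmetic difference is the order of operations: the paper first couples the step-$(t-1)$ environment and model states (via the multiply-by-one trick) and then applies determinism to the resulting one-step difference, whereas you collapse the environment's $t$-step distribution to a point mass first and then Markov-decompose the model's hit probability at step $t-1$; the two computations yield the identical bound.
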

\begin{proof}
  As in the proof of Lemma \ref{lem:hallucinated} note that for any
  $s_1, a_1, \pi$, $D^1_{s_1, a_1, \pi} = \hat{D}^1_{s_1, a_1, \pi}$. Further note
  that under these assumptions, for any $s_1, a_1$, $\|D^2_{s_1,a_1, \pi}- \hat{D}^2_{s_1,a_1,
    \pi}\big\|_1$
\begin{align*}
 &= \sum_{s, a}\pi(a)\big|P_{s_1}^{a_{1}}(s)-
   \hat{P}_{s_1}^{a_{1}}(s)\big| = \sum_{s}\big|P_{s_1}^{a_1}(s) - \hat{P}_{s_1}^{a_{1}}(s)\big|.
\end{align*}
For $t>2$,   $\|D^t_{s_1,a_1, \pi} - \hat{D}^t_{s_1,a_1, \pi}\|_1$
\begin{align*}
  &= \sum_{s_t}\bigg|\sum_{a_{2:t}} \pi(a_{2:t} \mid a_1) \bigg( \sum_{s}
    P_{s_1}^{a_{1:t-2}}(s) P_s^{a_{t-1}}(s_t)&\\
& \hspace{1.6in}- \sum_{z} \hat{P}_{s_1}^{a_{1:t-2}}(z) \hat{P}_{z}^{a_{t-1}}(s_t)\bigg)\bigg|&\\
  &= \sum_{s_t} \bigg|\sum_{a_{2:t-1}} \pi(a_{2:t-1} \mid a_1)&\\
&\hspace{.6in} \bigg( \sum_{s}
    P_{s_1}^{a_{1:t-2}}(s) P_s^{a_{t-1}}(s_t)\sum_{z}\hat{P}_{s_1}^{a_{1:t-2}}(z)\\
  &\hspace{.7in}- \sum_{z} \hat{P}_{s_1}^{a_{1:t-2}}(z) \hat{P}_z^{a_{t-1}}(s_t)\sum_{s} P_{s_1}^{a_{1:t-2}}(s)\bigg)\bigg|\\
  &= \sum_{s_t}\bigg| \sum_{a_{2:t-1}} \pi(a_{2:t-1} \mid a_1) \sum_{s, z}
  P_{s_1}^{a_{1:t-2}}(s)\hat{P}_{s_1}^{a_{1:t-2}}(z)&\\
&\hspace{1.8in} \Big(P_s^{a_{t-1}}(s_t) - \hat{P}_{z}^{a_{t-1}}(s_t)\Big)\bigg|\\
  &= \sum_{s_t}\bigg|\sum_{a_{2:t-1}} \pi(a_{2:t-1} \mid a_1) \sum_{s, z} P_{s_1}^{a_{1:t-2}}(s) \hat{P}_{s_1}^{a_{1:t-2}}(z)&\\
&\hspace{1.8in} \Big(P_s^{a_{t-1}}(s_t) - \hat{P}_z^{a_{t-1}}(s_t)\Big)\bigg|\\
  &\le \sum_{a_{2:t-1}} \pi(a_{2:t-1}) \sum_{s, z} P_{s_1}^{a_{1:t-2}}(s)\hat{P}_{s_1}^{a_{1:t-2}}(z)&\\
&\hspace{1.6in} \sum_{s_t}\Big|P_{s}^{a_{t-1}}(s_t) - \hat{P}_z^{a_{t-1}}(s_t)\Big|.
\end{align*}

Now, for any $s$, $z$, $a$, $\sum_{s_t}\Big|P_s^a(s_t) - \hat{P}_z^a(s_t)\Big|$
\begin{align*}
& = \big(1 - \hat{P}_z^a(\sigma_s^a)\big) + \sum_{s_t \ne
  \sigma_s^a}\big|-\hat{P}_z^a(s_t)\big|&\\
& = 2\big(1 - \hat{P}_z^a(\sigma_s^a)\big).&
\end{align*}

Combining these facts with Lemma \ref{lem:multistep},
\begin{align*}
&\epsilon_{val}^{\xi, \pi, T} = \expect_{(s, a) \sim \xi} \big[|Q_T^\pi(s, a) -
  \hat{Q}_T^\pi(s,a)|\big]&\\
  & \le M\sum_{t = 1}^{T} \gamma^{t-1} \expect_{(s, a) \sim
    \xi}\big[\|D^t_{s, a,
    \pi} - \hat{D}^t_{s, a, \pi}\|_1\big]&\\
  & = M\sum_{t = 2}^{T} \gamma^{t-1} \expect_{(s_1, a_1) \sim
    \xi}\bigg[\|D^t_{s_1, a_1, \pi}- \hat{D}^t_{s_1, a_1, \pi}\|_1\bigg]&\\
  & = M \gamma \expect_{(s_1, a_1) \sim
    \xi}\bigg[\|D^2_{s_1, a_1, \pi} -
  \hat{D}^2_{s_1, a_1, \pi}\|_1\bigg]&\\ 
  &\hspace{0.15in}+M\sum_{t = 3}^{T} \gamma^{t-1} \expect_{(s_1, a_1) \sim
    \xi}\bigg[\|D^t_{s_1, a_1, \pi}- \hat{D}^t_{s_1, a_1, \pi}\|_1\bigg]&\\
  & = M \gamma \expect_{(s_1, a_1) \sim
    \xi}\bigg[\|D^2_{s_1, a_1, \pi} -
  \hat{D}^2_{s_1, a_1, \pi}\|_1\bigg]&\\ 
  &\hspace{0.15in}+M\sum_{t = 2}^{T-1} \gamma^{t} \expect_{(s_1, a_1) \sim
    \xi}\bigg[\sum_{s', a'} \|D^{t+1}_{s_1, a_1, \pi} -
  \hat{D}^{t+1}_{s_1, a_1, \pi}\|_1\bigg]&\\
  & \le 2M\gamma \expect_{(s_1, a_1) \sim
    \xi}\big[1 -
  \hat{P}_{s_1}^{a_1}(\sigma_{s_1}^{a_1})\big]\\
  &\hspace{0.15in}+2M\sum_{t = 2}^{T-1} \gamma^{t} \expect_{(s_1, a_1) \sim
    \xi}\bigg[\sum_{a_{2:t}} \pi(a_{2:t}\mid a_1)&\\
&\hspace{0.8in} \sum_{s, z}
  P_{s_1}^{a_{1:t-1}}(s) \hat{P}_{s_1}^{a_{1:t-1}}(z)\Big(1 -
  \hat{P}_{z}^{a_{t}}(\sigma_{s}^{a_{t}})\Big) \bigg]&\\
  & = 2M\gamma \expect_{(s_1, a_1) \sim
    \xi}\big[1 -
  \hat{P}_{s_1}^{a_1}(\sigma_{s_1}^{a_1})\big]\\
  &\hspace{0.15in}+2M \gamma^{2} \sum_{s_2, z_2, a_2}\Big(1 -
  \hat{P}_{z_2}^{a_2}(\sigma_{s_2}^{a_2})\Big)&\\
&\hspace{1.1in} \expect_{(s_1, a_1) \sim
    \xi}\bigg[\pi(a_2 \mid a_1) P_{s_1}^{a_{1}}(s_2) \hat{P}_{s_1}^{a_{1}}(z_2)\bigg]&\\
  &\hspace{0.15in}+2M\sum_{t = 3}^{T-1} \gamma^{t} \sum_{s_t, z_t, a_t}\Big(1 -
  \hat{P}_{z_t}^{a_t}(\sigma_{s_t}^{a_t})\Big)&\\
&\hspace{0.35in} \expect_{(s_1, a_1) \sim
    \xi}\bigg[\sum_{a_{2:t-1}} \pi(a_{2:t} \mid a_1) P_{s_1}^{a_{1:t-1}}(s_t) \hat{P}_{s_1}^{a_{1:t-1}}(z_t)\bigg]&\\
  & = 2M\gamma \expect_{(s, z, a) \sim
    H^1_{\xi, \pi}}\big[1 -
  \hat{P}_{z}^{a}(\sigma_{s}^{a})\big]\\
  &\hspace{0.15in}+2M \gamma^{2} \expect_{(s, z, a) \sim H^2_{\xi, \pi}}\big[1 -
  \hat{P}_{z}^{a}(\sigma_{s}^{a})\big]&\\
  &\hspace{0.15in}+2M\sum_{t = 3}^{T-1} \gamma^{t} \expect_{(s, z, a) \sim H^t_{\xi, \pi}}\big[1 -
  \hat{P}_{z}^{a}(\sigma_{s}^{a})\big] &\\
&= 2M\sum_{t = 1}^{T-1}\gamma^t
  \expect_{(s, z, a) \sim H^{t}_{\xi, \pi}} \big[1 - \hat{P}(\sigma^a_s \mid
  z, a)\big].&\qedhere
\end{align*}
\end{proof}

\begin{theorem}%\label{thm:tightness}
  If $P$ is deterministic, then for any blind policy $\pi$ and any
  state-action distribution $\xi$,
\begin{align*}
\epsilon_{val}^{\xi, \pi, T} &\le\ M\sum_{t = 1}^{T} \gamma^{t-1}
                             \expect_{(s, a) \sim \xi}\big[\|D^t_{s, a,
                             \pi} - \hat{D}^t_{s, a, \pi}\|_1\big]\\
&\le
 2M \sum_{t = 1}^{T-1}\gamma^t
  \expect_{(s, z, a) \sim H^{t}_{\xi, \pi}} \big[1 -
  \hat{P}_z^a(\sigma^a_s)\big]\\
&\le \frac{2M}{1 - \gamma}\sum_{t = 1}^{T-1} (\gamma^{t} - 
\gamma^{T})\expect_{(s, a) \sim
  D^t_{\xi, \pi}}\big[1 - \hat{P}_s^a(\sigma^a_s)\big].
\end{align*}
\end{theorem}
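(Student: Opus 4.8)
The first two inequalities require no new argument: the first is precisely the statement of Lemma~\ref{lem:multistep}, and the second is precisely Lemma~\ref{lem:detblind}, both of which apply here since $\pi$ is blind and $P$ is deterministic. So the plan is to prove only the third inequality, which asserts that the hallucinated one-step error is dominated by a discounted accumulation of standard one-step errors. Writing $\delta_t = \expect_{(s,z,a)\sim H^t_{\xi,\pi}}[1 - \hat{P}_z^a(\sigma_s^a)]$ for the per-step hallucinated error and $e_t = \expect_{(s,a)\sim D^t_{\xi,\pi}}[1 - \hat{P}_s^a(\sigma_s^a)]$ for the per-step standard error, I would first reduce the claim to the per-step bound $\delta_t \le \sum_{i=1}^t e_i$. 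This reduction is a change in the order of summation: using $\frac{\gamma^t - \gamma^T}{1-\gamma} = \sum_{j=t}^{T-1}\gamma^j$, the right-hand side of the theorem equals $2M\sum_{j=1}^{T-1}\gamma^j\sum_{i=1}^j e_i$, while the middle expression is $2M\sum_{t=1}^{T-1}\gamma^t\delta_t$, so $\delta_t \le \sum_{i=1}^t e_i$ together with $\gamma^t \ge 0$ gives the inequality term by term.

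To establish $\delta_t \le \sum_{i=1}^t e_i$, I would couple the model and environment rollouts. Conditioning on a sampled trajectory, i.e. an initial pair $(s_1,a_1)\sim\xi$ together with a blind action sequence $a_{2:t}$, the true states $s_1,\dots,s_t$ are fixed (with $s_{i+1} = \sigma_{s_i}^{a_i}$), while the model states $z_1 = s_1, z_2, \dots, z_t$ form a Markov chain under $\hat{P}$ driven by the same actions. Splitting the conditional hallucinated error according to whether $z_t = s_t$ gives $\expect_{z_t}[1 - \hat{P}_{z_t}^{a_t}(\sigma_{s_t}^{a_t})] \le (1 - \hat{P}_{s_t}^{a_t}(\sigma_{s_t}^{a_t})) + \Pr[z_t \ne s_t]$, since the $z_t = s_t$ branch contributes the conditional one-step error times a probability at most one, and the $z_t \ne s_t$ branch contributes at most $\Pr[z_t \ne s_t]$.

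Next I would control the divergence probability by a first-deviation (union) bound. If $z_t \ne s_t$ then there is a first step $i \le t-1$ at which the model chain leaves the true trajectory, and the probability of first deviating at step $i$ is $\big(\prod_{j<i}\hat{P}_{s_j}^{a_j}(s_{j+1})\big)\,(1 - \hat{P}_{s_i}^{a_i}(\sigma_{s_i}^{a_i})) \le 1 - \hat{P}_{s_i}^{a_i}(\sigma_{s_i}^{a_i})$. Summing over $i$ yields $\Pr[z_t \ne s_t] \le \sum_{i=1}^{t-1}(1 - \hat{P}_{s_i}^{a_i}(\sigma_{s_i}^{a_i}))$, so the conditional hallucinated error is at most $\sum_{i=1}^t (1 - \hat{P}_{s_i}^{a_i}(\sigma_{s_i}^{a_i}))$. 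Finally I would take the expectation over trajectories: because $\pi$ is blind, the marginal of the trajectory distribution over $(s_i,a_i)$ is exactly $D^i_{\xi,\pi}$, so each conditional one-step term averages to $e_i$, while the conditional hallucinated error averages to $\delta_t$ (the step-$t$ marginal of the coupled rollout is $H^t_{\xi,\pi}$). This gives $\delta_t \le \sum_{i=1}^t e_i$, and the summation rearrangement completes the proof.

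The main obstacle is the first-deviation decomposition: one must verify that re-synchronization of the model with the environment after an error does not break the union bound. It does not, because the event $z_t \ne s_t$ still implies the existence of a first deviation at some $i \le t-1$, so the union is taken over a set that contains $\{z_t \ne s_t\}$; any later re-syncing only shrinks the divergence probability and is harmless. The discounting, the factor $2M$, and the reduction from $\|\cdot\|_1$ to the $1 - \hat{P}$ form all carry over unchanged from Lemma~\ref{lem:detblind}, so no further bookkeeping is needed beyond the per-step inequality and its summation.
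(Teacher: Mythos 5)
Your proposal is correct, and for the only part that requires new work (the third inequality) it takes a recognizably different route from the paper's. One small imprecision first: the second inequality of the theorem (multi-step error bounded by hallucinated error) is not the \emph{statement} of Lemma~\ref{lem:detblind}, which bounds $\epsilon_{val}^{\xi,\pi,T}$ itself; it is what the \emph{proof} of that lemma establishes en route, since that proof passes through the multi-step bound of Lemma~\ref{lem:multistep}. Nothing is missing, but you should cite the proof rather than the lemma statement.

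For the third inequality, both you and the paper split the hallucinated error at step $t$ according to whether the model state agrees with the environment state, and both bound the disagreement contribution by the probability of disagreement; the difference is how that probability is controlled. Writing $\delta_t$ and $e_t$ for the per-step hallucinated and standard errors as in your proposal, the paper observes that by determinism $\sum_{s,\,z\ne s}P_{s'}^{a'}(s)\hat{P}_{z'}^{a'}(z) = 1-\hat{P}_{z'}^{a'}(\sigma_{s'}^{a'})$ \emph{exactly}, so the disagreement mass at step $t+1$ equals the hallucinated error at step $t$; this yields a recurrence on the discounted sums, per step $\delta_{t+1}\le e_{t+1}+\delta_t$, which it then unrolls. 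You instead condition on the whole blind action sequence, note that the true trajectory is then deterministic, and apply a first-deviation union bound to get $\Pr[z_t\ne s_t]\le\sum_{i<t}\big(1-\hat{P}_{s_i}^{a_i}(\sigma_{s_i}^{a_i})\big)$ directly in terms of the standard one-step errors along the true trajectory, giving $\delta_t\le\sum_{i\le t}e_i$ in one shot; blindness of $\pi$ is exactly what makes the marginal of $(s_i,a_i)$ equal to $D^i_{\xi,\pi}$, as you say. Your union-bound step is valid: the event $\{z_t\ne s_t\}$ is contained in the disjoint union of first-deviation events, and any later re-synchronization only shrinks the probability. The concluding interchange of summation and geometric series matches the paper's. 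The paper's recurrence is marginally tighter at intermediate steps (it tracks the self-correction probability from the \emph{sampled} state rather than the true state), but after unrolling both arguments produce the identical bound, so your more elementary trajectory-level coupling is a perfectly good substitute.
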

\begin{proof}
  The first inequality was proven in Lemma \ref{lem:multistep}. The
  proof of Lemma \ref{lem:detblind} also proves the second
  inequality. Thus, we shall focus on the third inequality. First note that
\begin{align*}
  &\sum_{t = 1}^{T-1} \gamma^t\expect_{(s, z, a) \sim H^{t}_{\xi, \pi}}\big[1 -
  \hat{P}_z^a(\sigma^a_s)\big]&\\
&\hspace{.2in} = \sum_{s, a} \sum_{t=1}^{T-1} \gamma^t H^{t}_{\xi, \pi}(s, s, a)\big[1 -
  \hat{P}_{s}^{a}(\sigma^{a}_{s})\big]&\\
&\hspace{.5in}+\sum_{s, z \ne s, a}
  \sum_{t=1}^{T-1} \gamma^t H^{t}_{\xi, \pi}(s, z, a)\big[1 - \hat{P}_{z}^{a}(\sigma^{a}_{s})\big].&
\end{align*}
This expression has two terms; the first captures prediction error
when the model and environment are in the same state, the second when
the environment and world state differ. Consider the first term.
\begin{align*}
& \sum_{s, a} \sum_{t=1}^{T-1} \gamma^t H^{t}_{\xi,
  \pi}(s, s, a)\big[1 -\hat{P}_{s}^{a}(\sigma^{a}_{s})\big]&\\
& \hspace{.1in} = \sum_{s, a} \Big(1 -
\hat{P}_{s}^{a}(\sigma_{s}^{a})\Big)&\\
&\hspace{.4in} \bigg( \gamma\xi(s, a) + \gamma^2 \expect_{(s_1, a_1) \sim \xi}\pi(a \mid a_1)
P_{s_1}^{a_1}(s)\hat{P}_{s_1}^{a_1}(s)\bigg)&\\
&\hspace{.2in} + \sum_{t = 3}^{T-1}\gamma^t \sum_{s_t, a_t} \Big(1 -
\hat{P}_{s_t}^{a_t}(\sigma_{s_t}^{a_t})\Big)&\\
&\hspace{.3in}\expect_{(s_1, a_1) \sim
                               \xi}\bigg[\sum_{a_{2:t-1}} \pi(a_{2:t}
                             \mid a_1) P_{s_1}^{a_{1:t-1}}(s_t) \hat{P}_{s_1}^{a_{1:t-1}}(s_t) \bigg]&\\
& \hspace{.1in} = \sum_{s, a} \Big(1 -
\hat{P}_{s}^{a}(\sigma_{s}^{a})\Big)&\\
&\hspace{.4in} \bigg( \gamma\xi(s, a) + \gamma^2 \expect_{(s_1, a_1) \sim \xi}\pi(a \mid a_1)
P_{s_1}^{a_1}(s)\bigg)&\\
&\hspace{.2in} + \sum_{t = 3}^{T-1}\gamma^t \sum_{s_t, a_t} \Big(1 -
\hat{P}_{s_t}^{a_t}(\sigma_{s_t}^{a_t})\Big)&\\
&\hspace{.3in}\expect_{(s_1, a_1) \sim
                               \xi}\bigg[\sum_{a_{2:t-1}} \pi(a_{2:t}
                             \mid a_1) P_{s_1}^{a_{1:t-1}}(s_t)\bigg]&\\
& \hspace{.1in} = \sum_{t = 1}^{T-1}\gamma^t
  \expect_{(s, a) \sim D^{t}_{\xi, \pi}} \big[1 - \hat{P}_s^a(\sigma^a_s)\big].&
\end{align*}

Now consider the second term. Note that for any $a$ and $s \ne z$,
$H_{\xi, \pi}^1(s, z, a) = 0$. So,
\begin{align*}
&\sum_{s, z \ne s, a} \sum_{t=1}^{T-1} \gamma^t H^{t}_{\xi,
  \pi}(s, z, a)\big[1 -\hat{P}_z^a(\sigma^a_s)\big]&\\
&\hspace{.1in} = \sum_{s, z \ne s, a} \sum_{t=2}^{T-1} \gamma^t H^{t}_{\xi,
  \pi}(s, z, a)\big[1 -\hat{P}_z^a(\sigma^a_s)\big]&\\
&\hspace{.1in}= \sum_{s, z \ne s, a} \sum_{t=1}^{T-2} \gamma^{t+1} H^{t+1}_{\xi,
  \pi}(s, z, a)\big[1 -\hat{P}_z^a(\sigma^a_s)\big]&\\
&\hspace{.1in} \le \sum_{s, z \ne s, a} \sum_{t=1}^{T-2} \gamma^{t+1} H^{t+1}_{\xi,
  \pi}(s, z, a)&\\
&\hspace{.1in} = \gamma \sum_{s', z', a'}\bigg(\sum_{s, z\ne s} P_{s'}^{a'}(s)\hat{P}_{z'}^{a'}(z)\bigg)\sum_{t=1}^{T-2}\gamma^t H^t_{\xi,
  \pi}(s', z', a')&\\
&\hspace{.1in} = \gamma \sum_{s', z', a'}\big(1-
\hat{P}_{z'}^{a'}(\sigma_{s'}^{a'})\big)\sum_{t=1}^{T-2}\gamma^t H^t_{\xi,
  \pi}(s', z', a')&\\
&\hspace{.1in} = \gamma \sum_{t=1}^{T-2} \gamma^t \expect_{(s, z, a) \sim H^{t}_{\xi,
  \pi}}\big[1 -\hat{P}_z^a(\sigma^a_s)\big].&
\end{align*}

Combining the two reveals a recurrence relation.
\begin{align*}
& \sum_{t = 1}^{T-1} \gamma^t\expect_{(s, z, a) \sim H^{t}_{\xi, \pi}}\big[1 -
  \hat{P}_z^a(\sigma^a_s)\big]&\\
&\hspace{.2in} \le \sum_{t = 1}^{T-1}\gamma^t
  \expect_{(s, a) \sim D^{t}_{\xi, \pi}} \big[1 -
  \hat{P}_s^a(\sigma^a_s)\big]&\\
&\hspace{1in} + \gamma \sum_{t=1}^{T-2} \gamma^t \expect_{(s, z, a) \sim H^{t}_{\xi,
  \pi}}\big[1 -\hat{P}_z^a(\sigma^a_s)\big].&
\end{align*}

Unrolling the recurrence, we see that
\begin{align*}
& \sum_{t = 1}^{T-1} \gamma^t\expect_{(s, z, a) \sim H^{t}_{\xi, \pi}}\big[1 -
  \hat{P}_z^a(\sigma^a_s)\big]&\\
&\hspace{.75in} \le \sum_{j=1}^{T-1} \gamma^{j-1}\sum_{t = 1}^{T-j}\gamma^t
  \expect_{(s, a) \sim D^{t}_{\xi, \pi}} \big[1 - \hat{P}_s^a(\sigma^a_s)\big].&
\end{align*}

Now note that
\begin{align*}
\sum_{j=1}^{T-1}& \gamma^{j-1}\sum_{t = 1}^{T-j}\gamma^t
  \expect_{(s, a) \sim D^{t}_{\xi, \pi}} \big[1 - \hat{P}_s^a(\sigma^a_s)\big]&\\
&= \sum_{t = 1}^{T-1}\sum_{j=1}^{T-t} \gamma^{j-1}\gamma^t
  \expect_{(s, a) \sim D^{t}_{\xi, \pi}} \big[1 - \hat{P}_s^a(\sigma^a_s)\big]&\\
&= \sum_{t = 1}^{T-1}\gamma^t
  \expect_{(s, a) \sim D^{t}_{\xi, \pi}} \big[1 - \hat{P}_s^a(\sigma^a_s)\big]\sum_{j=1}^{T-t} \gamma^{j-1}&\\
&= \sum_{t = 1}^{T-1}\gamma^t
  \expect_{(s, a) \sim D^{t}_{\xi, \pi}} \big[1 - \hat{P}_s^a(\sigma^a_s)\big]\frac{1 - \gamma^{T-t}}{1 - \gamma}&\\
&= \frac{1}{1-\gamma}\sum_{t = 1}^{T-1}(\gamma^t - \gamma^{T})
  \expect_{(s, a) \sim D^{t}_{\xi, \pi}} \big[1 - \hat{P}_s^a(\sigma^a_s))\big].&\qedhere
\end{align*}
\end{proof}

\begin{lem} %\label{lem:prederror}
  In H-DAgger-MC, the policies $\hat{\pi}_{1:N}$ are such that for any
  policy $\pi$,
\begin{align*}
  \expect_{s
  \sim \mu}\big[V^{\pi}(s) - V^{\bar{\pi}}(s)\big] \le \frac{8
  M}{1-\gamma} c_{\nu}^{\pi} \sum_{t = 1}^{T-1}
  \bar{\epsilon}^t_{prd} + \bar{\epsilon}_{mc}.
\end{align*}
\end{lem}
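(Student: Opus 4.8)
The plan is to adapt Ross and Bagnell's DAgger reduction: bound the per-iteration suboptimality of each greedy policy using Lemma \ref{lem:mcvaluebound}, replace the resulting value-error term by a hallucinated one-step error using Lemma \ref{lem:detblind} (legitimate because $P$ is deterministic and the rollout policy $\rho$ is blind), reconcile the idealized mixture that Lemma \ref{lem:mcvaluebound} prescribes with the distribution the algorithm actually samples (paying a factor $c_\nu^\pi$), and finally average over iterations, using that value is linear in the mixture policy $\bar{\pi}$.

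Fix the comparison policy $\pi$ and an iteration $n$. The planner produced $\hat{\pi}_{n-1}$ as the greedy policy with respect to one-ply MC on the unrolled model $\hat{P}^{1:T-1}_{n-1}$ with rollout policy $\rho$ and depth $T$, so Lemma \ref{lem:mcvaluebound} applies directly and gives $\expect_{s \sim \mu}[V^\pi(s) - V^{\hat{\pi}_{n-1}}(s)] \le \frac{4}{1-\gamma}\epsilon_{val}^{\xi_n, \rho, T} + \epsilon_{mc, n-1}$, where $\xi_n$ is exactly the mixture of that lemma with $\hat{\pi}_{n-1}$ in the role of the greedy policy and $\epsilon_{mc,n-1}$ the per-iteration planner error. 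Since $P$ is deterministic and $\rho$ is blind, Lemma \ref{lem:detblind} (applied with $\pi = \rho$, read with the unrolled model so that the depth-$t$ prediction uses $\hat{P}^t_{n-1}$) bounds $\epsilon_{val}^{\xi_n, \rho, T} \le 2M\sum_{t=1}^{T-1}\gamma^t \expect_{(s,z,a) \sim H^t_{\xi_n, \rho}}[1 - \hat{P}^t_{n-1}(\sigma_s^a \mid z, a)]$; multiplying by $\frac{4}{1-\gamma}$ produces the $\frac{8M}{1-\gamma}$ prefactor.

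The heart of the argument is identifying $H^t_{\xi_n, \rho}$ with the distribution $H^t_n$ that the algorithm samples. The two agree on the branches built from $\hat{\pi}_{n-1}$ (the $\tfrac12 D_{\mu, \hat{\pi}_{n-1}}$ and $\tfrac{1-\gamma}{4}\,\mu\,\hat{\pi}_{n-1}$ components), and the rollout loop reproduces $H^t_{\cdot, \rho}$ exactly because it executes a single sampled action sequence in both $P$ and $\hat{P}^{1:T-1}_{n-1}$. The only discrepancy is that Lemma \ref{lem:mcvaluebound}'s mixture contains two components built from $D_{\mu, \pi}$ — the $\tfrac14 D_{\mu, \pi}$ term and the $\tfrac{\gamma}{4}$ one-step-lookahead term — whereas the algorithm substitutes the exploration distribution $\nu$ in precisely those two places. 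By definition $D_{\mu, \pi}(s,a) \le c_\nu^\pi\,\nu(s,a)$ pointwise, so each $\pi$-dependent component of $\xi_n$ is at most $c_\nu^\pi$ times its $\nu$-based counterpart; since $c_\nu^\pi \ge 1$ the matching components are dominated too, and linearity of $H^t_{\cdot, \rho}$ in its initial distribution yields $H^t_{\xi_n, \rho}(s,z,a) \le c_\nu^\pi\, H^t_n(s,z,a)$ pointwise, hence $\expect_{H^t_{\xi_n, \rho}}[1 - \hat{P}^t_{n-1}(\sigma_s^a \mid z,a)] \le c_\nu^\pi \expect_{H^t_n}[1 - \hat{P}^t_{n-1}(\sigma_s^a \mid z,a)]$.

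Bounding $\gamma^t \le 1$ and averaging the per-iteration inequalities over $n$ then finishes the argument: value is linear in the uniform mixture, so $\frac1N\sum_n \expect_{s \sim \mu}[V^\pi - V^{\hat{\pi}_n}] = \expect_{s \sim \mu}[V^\pi - V^{\bar{\pi}}]$, the averaged planner error is $\bar{\epsilon}_{mc}$, and the averaged hallucinated errors collapse into $\sum_{t=1}^{T-1}\bar{\epsilon}^t_{prd}$. I expect the main obstacle to be exactly the reconciliation above: one must check branch-by-branch that the algorithm's sampling reproduces Lemma \ref{lem:mcvaluebound}'s mixture with $\nu$ standing in for $D_{\mu, \pi}$, that the substitution cost is uniformly $c_\nu^\pi$, and — most delicately — that the index bookkeeping is honest, since the model and policy generating $H^t_n$ ($\hat{P}^{1:T-1}_{n-1}$, $\hat{\pi}_{n-1}$) are the ones whose value error is controlled, whereas the prediction error that appears in the statement, $\bar{\epsilon}^t_{prd}$, is evaluated under $\hat{P}^t_n$; closing that gap (via the data-aggregation structure, up to boundary terms from the first and last iterations) is the technical crux.
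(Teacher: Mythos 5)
Your proposal is correct and follows essentially the same route as the paper's proof: per-iteration application of Lemma \ref{lem:mcvaluebound}, then Lemma \ref{lem:detblind} with $\gamma^t \le 1$, then a branch-by-branch comparison of Lemma \ref{lem:mcvaluebound}'s mixture against the algorithm's sampling distribution using $D_{\mu,\pi} \le c_\nu^\pi\,\nu$ and $c_\nu^\pi \ge 1$, and finally averaging over iterations. The index mismatch you flag as the crux is real, but the paper resolves it by simply adopting the convention that $H^t_n$ is the distribution induced by $(\hat{\pi}_n, \hat{P}^{1:T-1}_n)$, so no appeal to the data-aggregation structure is needed for this lemma (only for the no-regret step in Theorem \ref{thm:hdaggermc}).
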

\begin{proof}
Recall that 
\begin{align*}
&\expect_{s \sim \mu}\big[V^{\pi}(s) - V^{\bar{\pi}}(s)\big] = \frac{1}{N}\sum_{n = 1}^N \expect_{s \sim \mu} \big[V^{\pi}(s) -
V^{\hat{\pi}_n}(s)\big].&
\end{align*}
and by Lemma 1 for any $n \ge 1$, 
\begin{align*}
&\expect_{s \sim \mu} \big[V^{\pi}(s) -
V^{\hat{\pi}_n}(s)\big] \le&\\
&\hspace{.45in}\frac{4}{1 - \gamma} \expect_{(s, a)
  \sim \xi^{\pi, \hat{\pi}_n}_{\mu}}[|\hat{Q}_{T,n}^{\rho}(s, a) - Q_T^{\rho}(s, a)|] +\bar{\epsilon}_{mc},&
\end{align*}
where 
\begin{align*}
\xi^{\pi, \hat{\pi}_n}_{\mu}(s, a) = \frac{1}{2} D_{\mu, \hat{\pi}_n}&(s, a) + \frac{1}{4}
  D_{\mu, \pi}(s, a)\\ 
+ \frac{1}{4}\Big(&(1 - \gamma) \mu(s)
    \hat{\pi}_n(a \mid s)\\  
&+\gamma \sum_{z, b} D_{\mu, \pi}(z, b)
    P_{z}^{b}(s) \hat{\pi}_n(a \mid s) \Big).
\end{align*} 
Combining this with Lemma \ref{lem:detblind},
\begin{align*}
&\frac{1}{N} \sum_{n=1}^N \frac{4}{1 - \gamma} \expect_{(s, a)
  \sim \xi^{\pi, \hat{\pi}_n}_{\mu}}[|\hat{Q}_{T,n}^{\rho}(s, a) - Q_T^{\rho}(s, a)|] +\bar{\epsilon}_{mc}&\\
& \hspace{.1in} \le \frac{1}{N} \sum_{n=1}^N \frac{8M}{1 - \gamma} \sum_{t =
  1}^{T-1} \gamma^t&\\
&\hspace{.8in} \expect_{(s, z, a) \sim H^{t,n}_{\xi^{\pi,
      \hat{\pi}_n}_{\mu}, \rho}}\big[1 - \hat{P}_n^t(\sigma_s^a \mid z, a)\big] +\bar{\epsilon}_{mc}&\\
& \hspace{.1in} \le \frac{8M}{1 - \gamma} \frac{1}{N} \sum_{n=1}^N \sum_{t =
  1}^{T-1}&\\
&\hspace{.8in} \expect_{(s, z, a) \sim H^{t, n}_{\xi^{\pi,
      \hat{\pi}_n}_{\mu}, \rho}}\big[1 - \hat{P}_n^t(\sigma_s^a \mid z, a)\big] +\bar{\epsilon}_{mc}.&
\end{align*}
\vfill \break
Now note that for any $t$ and any $n$,
\begin{align*}
&\expect_{(s, z, a) \sim H^{t, n}_{\xi^{\pi,
      \hat{\pi}_n}_{\mu}, \rho}}\big[1 - \hat{P}_n^t(\sigma_s^a \mid z, a)\big]&\\
&= \frac{1}{2} \sum_{s', a'} D_{\mu, \hat{\pi}_n}(s', a') \expect_{(s, z, a) \sim H^{t, n}_{s', a', \rho}}\big[1 -
\hat{P}_n^t(\sigma_s^a\mid z, a)\big]&\\
&\hspace{.15in}+\frac{1}{4} \sum_{s', a'} D_{\mu, \pi}(s', a') \expect_{(s, z, a) \sim H^{t, n}_{s', a', \rho}}\big[1 -
\hat{P}_n^t(\sigma_s^a \mid z, a)\big]&\\
&\hspace{.15in}+\frac{\gamma}{4} \sum_{s', a'} \sum_{s'', a''}D_{\mu, \pi}(s'',
a'')P_{s''}^{a''}(s')\hat{\pi}_n(a' \mid s')&\\
&\hspace{1.3in} \expect_{(s, z, a) \sim H^{t, n}_{s', a', \rho}}\big[1 -
\hat{P}_n^t(\sigma_s^a \mid z, a)\big]&\\
&\hspace{.15in}+\frac{1- \gamma}{4} \sum_{s', a'}
\mu(s')\hat{\pi}_n(a' \mid s')&\\
&\hspace{1.3in} \expect_{(s, z, a) \sim H^{t, n}_{s', a', \rho}}\big[1 -
\hat{P}_n^t(\sigma_s^a \mid z, a)\big]&\\
&\le \frac{1}{2} \sum_{s', a'} D_{\mu, \hat{\pi}_n}(s', a') \expect_{(s, z, a) \sim H^{t, n}_{s', a', \rho}}\big[1 -
\hat{P}_n^t(\sigma_s^a \mid z, a)\big]&\\
&\hspace{.15in}+\frac{1}{4} c^\pi_\nu \sum_{s', a'} \nu(s', a') \expect_{(s, z, a) \sim H^{t, n}_{s', a', \rho}}\big[1 -
\hat{P}_n^t(\sigma_s^a \mid z, a)\big]&\\
&\hspace{.15in}+\frac{\gamma}{4} c^\pi_\nu \sum_{s', a'} \sum_{s'', a''}\nu(s'',
a'')P_{s''}^{a''}(s')\hat{\pi}_n(a' \mid s')&\\
&\hspace{1.3in} \expect_{(s, z, a) \sim H^{t, n}_{s', a', \rho}}\big[1 -
\hat{P}_n^t(\sigma_s^a \mid z, a)\big]&\\
&\hspace{.15in}+\frac{1- \gamma}{4} \sum_{s', a'}
\mu(s')\hat{\pi}_n(a' \mid s')&\\
&\hspace{1.3in} \expect_{(s, z, a) \sim H^{t, n}_{s', a', \rho}}\big[1 -
\hat{P}_n^t(\sigma_s^a \mid z, a)\big]&\\
&\le c^\pi_\nu \bigg(\frac{1}{2} \sum_{s', a'} D_{\mu,
  \hat{\pi}_n}(s', a')&\\
&\hspace{1.3in} \expect_{(s, z, a) \sim H^{t, n}_{s', a', \rho}}\big[1 -
\hat{P}_n^t(\sigma_s^a \mid z, a)\big]&\\
&\hspace{.15in}+\frac{1}{4} \sum_{s', a'} \nu(s', a') \expect_{(s, z, a) \sim H^{t, n}_{s', a', \rho}}\big[1 -
\hat{P}_n^t(\sigma_s^a \mid z, a)\big]&\\
&\hspace{.15in}+\frac{\gamma}{4} \sum_{s', a'} \sum_{s'', a''}\nu(s'',
a'')P_{s''}^{a''}(s')\hat{\pi}_n(a' \mid s')&\\
&\hspace{1.3in} \expect_{(s, z, a) \sim H^{t, n}_{s', a', \rho}}\big[1 -
\hat{P}_n^t(\sigma_s^a \mid z, a)\big]&\\
&\hspace{.15in}+\frac{1- \gamma}{4} \sum_{s', a'} \mu(s')\hat{\pi}_n(a'
\mid s')&\\
&\hspace{1.3in} \expect_{(s, z, a) \sim H^{t, n}_{s', a', \rho}}\big[1 -
\hat{P}_n^t(\sigma_s^a \mid z, a)\big]\bigg)&\\
&=c^\pi_\nu \expect_{(s, z, a) \sim H^{t, n}_{\xi_n, \rho}}\big[1 - \hat{P}_n^t(\sigma_s^a \mid z, a)\big].&
\end{align*}
\clearpage
When $t = 1$,
$\expect_{(s, z, a) \sim H^{t, n}_{\xi_n, \rho}}\big[1 -
\hat{P}_n^t(\sigma_s^a \mid z, a)\big] = \expect_{(s, a) \sim \xi_n(s,
  a)}\big[1 - \hat{P}_n^t(\sigma_s^a \mid s, a)\big]$. When $t > 1$,
\begin{align*}
&\expect_{(s, z, a) \sim H^{t, n}_{\xi_n, \rho}}\big[1 -
\hat{P}_n^t(\sigma_s^a \mid z, a)\big]&\\
&= \sum_{s_t, z_t, a_t}\expect_{(s_1, a_1) \sim
  \xi_n}\bigg[\sum_{a_{1:t-1}}\rho(a_{2:t} \mid a_1)&\\
&\hspace{.5in}P_{s_1}^{a_{0:t-1}}(s_t \mid
s_1, a_{0:t-1})\hat{P}_{n}^{1:t-1}(z_t \mid
s_1,a_{0:t-1})\bigg]&\\
&\hspace{2in}\big(1 - \hat{P}_{n}^{t}(\sigma_{s_t}^{a_t} \mid z_t, a_t)\big)&\\
&=\expect_{(s, z, a) \sim H^t_n}\big[1 -
\hat{P}_n^t(\sigma_s^a \mid z, a)\big].&
\end{align*}

Thus, putting it all together, we have shown that
\begin{align*}
  &\expect_{s \sim \mu}\big[V^{\pi}(s) - V^{\bar{\pi}}(s)\big] &\\
&\hspace{.1in}\le
  \frac{8M}{1 - \gamma} c^\pi_{\nu}\frac{1}{N} \sum_{n=1}^N
  \sum_{t = 1}^{T-1}\expect_{(s, z, a) \sim H^t_n}\big[1 -
\hat{P}_n^t(\sigma_s^a \mid z, a)\big]&\\
&\hspace{2.9in} + \bar{\epsilon}_{mc}&\\
&\hspace{.1in}= \frac{8M}{1 - \gamma} c^\pi_{\nu}\sum_{t =
  1}^{T-1}\bar{\epsilon}^t_{prd} + \bar{\epsilon}_{mc}.&\qedhere
\end{align*}
\end{proof}

\end{document}